\declaretheorem{theorem}
\declaretheorem{lemma}
\declaretheorem{definition}
\definecolor{green}{rgb}{0.0, 0.5, 0.0}
\definecolor{green3}{rgb}{0,0.6,0}
\definecolor{indigo}{rgb}{0.3,0.0,0.5}
\DeclareMathOperator*{\argmin}{arg\,min}
\DeclareMathOperator*{\minimize}{minimize~~}
\newcommand{\Mat}[1]{\bm{#1}}
\newcommand{\MH}[0]{\Mat{H}}
\newcommand{\MW}[0]{\Mat{W}}
\newcommand{\MY}[0]{\Mat{Y}}
\newcommand{\Rbb}[0]{\mathbb{R}}
\begin{document}
\title{Efficient algorithms for regularized Poisson Non-negative Matrix Factorization}

\author[1]{Nathana\"el Perraudin}
\author[2]{Adrien Teutrie}
\author[3]{Cécile Hébert}
\author[1]{Guillaume Obozinski}
\affil[1]{Swiss Data Science Center, EPFL and ETH Z\"urich, Andreasstrasse 5, Z\"urich, 8050, Z\"urich, Switzerland}
\affil[2]{Unité Matériaux et Transformations, UMR-CNRS 8207, Université de Lille, Cité scientifique, Bâtiment C6, Villeneuve d’Ascq, 59655, Nord, France}
\affil[3]{Electron Spectrometry and Microscopy Laboratory, Institute of Physics, EPFL, Bâtiment PH, Station 3, Lausanne, 1015, Vaud, Switzerland}

\date{}
\setcounter{Maxaffil}{0}
\renewcommand\Affilfont{\itshape\small}

\maketitle

\begin{abstract}
We consider the problem of regularized Poisson Non-negative Matrix Factorization (NMF) problem, encompassing various regularization terms such as Lipschitz and relatively smooth functions, alongside linear constraints. This problem holds significant relevance in numerous Machine Learning applications, particularly within the domain of physical linear unmixing problems. A notable challenge arises from the main loss term in the Poisson NMF problem being a KL divergence, which is non-Lipschitz, rendering traditional gradient descent-based approaches inefficient. In this contribution, we explore the utilization of Block Successive Upper Minimization (BSUM) to overcome this challenge. We build approriate majorizing function for Lipschitz and relatively smooth functions, and show how to introduce linear constraints into the problem. This results in the development of two novel algorithms for regularized Poisson NMF. We conduct numerical simulations to showcase the effectiveness of our approach.
\end{abstract}

\paragraph*{Disclaimer}
This document is a technical report and has not undergone peer review. 
The findings and conclusions presented herein are solely based on the authors' research and analysis.
We apologize for any potential errors or shortcomings in the content.

\section{Introduction}
The problem of factorizing a matrix $\MY\approx\MW\MH$ as Non Negative components $\MW\geq0,\MH\geq0$ is central in many Machine Learning (ML) applications~\cite{smaragdis2003non,fevotte2009nonnegative,brunet2004metagenes}.
The motivation for performing such a factorization is that $\MY$ is often associated with a probability distribution density of the form $P_{\MY}(\MW,\MH)=\tilde{P}_{\MY}(\MW\MH)$.
Typically, the optimal decomposition is found by minimizing the negative log-likelihood of that distribution:
\begin{equation}
\minimize_{\MW\geq0,\MH\geq0}-\sum_{i,j}\log\left(P_{\MY}\left(\MW,\MH\right)\right)
\label{eq:general-loss}
\end{equation}
If $\MY$ is assumed to be perturbed with Normal noise, we obtain a Gaussian distribution, i.e. $P_{\MY}\left(\MW,\MH\right)\propto e^{-\|\MW\MH-\MY\|^{2}}$, and we end up with the classic non-negative matrix factorization (NMF) problem~\cite{lee2001algorithms,lee1999learning}, where the quadratic function$\|\MW\MH-\MY\|^{2}$ is minimized. 
For other distribution families, and in particular exponential families, a $\log$ term often appears. In particular, the Poisson negative log-likelihood model~\cite{lee2001algorithms} leads to a loss of the form
\begin{equation}
\mathcal{L}_{\MY}\left(\MW,\MH\right):=-\left\langle \MY,\log\left(\MW\MH\right)\right\rangle +\left\langle \mathbf{1},\MW\MH\right\rangle \propto-\sum_{i,j}\log\left(P_{\MY}\left(\MW,\MH\right)\right),\label{eq:general-poisson-loss}
\end{equation}
where the inner product over matrices is the Frobenius inner product defined as $\left\langle \Mat{A},\Mat{B}\right\rangle =\sum_{ij}a_{ij}b_{ij}$. 

\paragraph{Regularized Poisson Non Negative Matrix Factorisation}
In many problems (see, for example, \cite{teurtrie2023espm, teboulle2020novel,kannan2018deep,teurtrie2024stem}), additional prior information about the matrices $\MW,\MH$ is known. For example, it might be known that the columns of $\MH$ are smooth, or that the rows of $\MW$ are sparse. One might also be interested in normalizing to unity the columns of $\MH$ or the rows of $\MW$ because they might quantify physical quantities for which normalization is necessary. For example, in the analysis of hyperspectral imaging data, the images $\MH$ are assumed to be smooth and the components $\MW$ are summing to the unity \cite{teurtrie2023espm, teurtrie2024stem}.
Typically, this information can be encoded via an extra regularization term $R\left(\MW,\MH\right)$ and/or additional constraints $\MW\in\mathcal{C}_{1},\MH\in\mathcal{C}_{2}$.
This leads to the general optimization problem we solve in this contribution:
\begin{equation}
\begin{split}
\minimize_{\MW ,\MH}~~ & \mathcal{L}_{\MY}\left(\MW,\MH\right){\color{brown}+R_{W}\left(\MW\right)+R_{H}\left(\MH\right),} \\
\text{subject to}~~ & {\color{violet}\MW\geq\epsilon,\MH\geq\epsilon},~~{\color{cyan}\bm{e}_{H}^{\top}\MH=\bm{1} \text{ or }\MW\bm{e}_{W}=\bm{1}^{\top}}
\end{split}
\label{eq:general-problem}
\end{equation}
where $\epsilon>0$. The different colors emphasize the changes compared to the traditional problem of \cite{lee2001algorithms}. First, in violet, we slightly simplify the problem by imposing strict non-negtativity. While, this is not strictly necessary\footnote{The case $\epsilon=0$ could be handled with an approach similar to~\cite{lin2007convergence}.}, this assumption significantly simplify our analysis. We believe that handling $\epsilon=0$ is an unnecessary complication, as the results with $\epsilon$ close to machine precision will be practically identical. Second, in light blue, we consider the case where the constraints are linear, specifically $\bm{e}_{H}^{\top}\MH=\bm{1}$ or $\MW\bm{e}_{W}=\bm{1}^{\top}$.
In general, it is only meaningful to use one of the constraints, as it will fix the ratio between $\MW$ and $\MH$. We note that this includes the simplex constraint when $\bm{e}=\bm{1}.$ 
Third, in brown, we consider regularizations $R_{W}\left(\MW\right)+R_{H}\left(\MH\right)$ of the form:
\begin{equation}
r\left(\bm{x}\right)=s_{L}\left(\bm{x}\right)+s_{R}\left(\bm{x}\right)+\sum_{j=1}^{n}s_{C}\left(x_{j}\right),\label{eq:regularisation-general}
\end{equation}
where $\bm{x}$ is the vector of a row of $\MW$ or a colum of $\bm{H}$., i.e $R_{W}\left(\MW\right)=\sum_{i}r_{H}\left(\bm{w}_{i}\right)$ and $R_{H}\left(\MH\right)=\sum_{j}r_{W}\left(\bm{h}_{i}\right).$


\begin{enumerate}
\item 
The term $s_{L}$ is assumed to be $\sigma_{L}$ gradient Lipschitz, i.e., there exists a $\sigma_{L}$ such that
\begin{equation}
\|\nabla s_{L}\left(\bm{x}\right)-\nabla s_{L}\left(\bm{y}\right)\|_{2}\leq\sigma_{L}\|\bm{x}-\bm{y}\|_{2}\hspace{1em}\text{for all }\bm{x},\bm{y}\in\mathcal{C}.
\label{eq:gradient-lipschitz}
\end{equation}
Alternatively, this condition could be rewriten as 
\[
s_{L}\left(\bm{y}\right) \leq s_{L} \left(\bm{x}\right) 
+ \left\langle \nabla s_L \left(\bm{x}\right),\bm{y}-\bm{x}\right\rangle
 + \frac{\sigma_{L}}{2} \|\bm{x} -\bm{y} \|_{2}^2 \hspace{1em}\text{for all }\bm{x},\bm{y}\in\mathcal{C}.
\]
In this contribution, we will consider in particular $s_{L}\left(\bm{x}\right)=\bm{x}^{\top}\Delta\bm{x}$,
where $\Delta$ is the Laplacian operator, favoring smoothness in the columns of $\MH$. In this case $\sigma_L=2 \lambda_{\text{max}}(\Delta)$.
\item
The term $s_{R}\left(\bm{x}\right)$ is assumed to be $\sigma_{R}$ relatively smooth with respect to $\kappa\left(\bm{x}\right)=-\bm{1}^{\top}\log\left(\bm{x}\right)$. Relative smoothness is a generalization of Lipschitz smoothness, and is defined as follows~\cite[Definition 1.1]{lu2018relatively}:
\begin{equation}
f\left(\bm{y}\right)\leq f\left(\bm{x}\right)+\left\langle \nabla f\left(\bm{x}\right),\bm{y}-\bm{x}\right\rangle +\sigma_{R}\mathcal{B}_{\kappa}\left(\bm{y},\bm{x}\right) \label{eq:relative-smoothness}
\end{equation}
where $\mathcal{B}_{\kappa}$ is the Bregman divergence~\cite{bregman1967relaxation} associated with $\kappa$~\cite[equation 7]{lu2018relatively}:
\[
\mathcal{B}_{\kappa}\left(\bm{y},\bm{x}\right):=\kappa\left(\bm{y}\right)-\kappa\left(\bm{x}\right)-\left\langle \nabla\kappa\left(\bm{x}\right),\bm{y}-\bm{x}\right\rangle \hspace{1em}\text{for all }\bm{x},\bm{y}\in\mathcal{C}.
\]
While Lipschitz functions can be upper bounded with quadratic functions, we observe from \eqref{eq:relative-smoothness} that relative smoothness allows us to upper bound a function with the Bregman divergence of a function $\kappa$. This allows us to use a much wider range of functions to regularize our problem, and in particular non-gradient Lipschitz ones.
As an example, let us consider $\kappa\left(\bm{x}\right)=-\bm{1}^{\top}\log\left(\bm{x}\right)$, with its Bregman divergence:
\[
\mathcal{B}_{\kappa}\left(\bm{y},\bm{x}\right)=\sum_{i}\left(\frac{y_{i}}{x_{i}}-\log\left(\frac{y_{i}}{x_{i}}\right)-1\right)
\]
One can observe that the objective function \eqref{eq:general-poisson-loss} is relatively smooth with respect to $\kappa\left(\bm{x}\right)=-\bm{1}^{\top}\log\left(\bm{x}\right)$.
This term could also be used to introduce soft contraints such as a log-barrier:
$s_{R}\left(\bm{x}\right)=-\bm{1}^{\top}\log\left(\bm{x}-\epsilon\bm{1}\right)$
for $\bm{x}>\epsilon$ and $+\infty$ otherwise. 
\item 
Eventually, $s_{C}\left(x\right)$ is a smooth point-wise concave function (i.e. $-s_C$ is convex). A typical example of this regularisation could be $s_{C}\left(x\right)=\log\left(x+\alpha^{-1}\right)$
which favor sparsity in the vector $\bm{x}$ without penalizing large values too heavily. Its slope starts at $\alpha$ for $x=0$ and
tends to $0$ for $x\rightarrow\infty$. 
\end{enumerate}
Our approach can handle regularizations of the form $R\left(\MW,\MH\right).$ However, for simplicity of notation, we restrict ourselves to separable regularizations.

The fidelity term $\mathcal{L}_{\MY}\left(\MW,\MH\right)$ is convex in $\MH$ and in $\MW$, but jointly non convex. We note that depending on the regularization and the constraints, multiple equivalent scaled solutions (stationary points) could exist, i.e., $\MW\MH=\MW^{\prime}\MH^{\prime}$ for $\MW^{\prime}=\alpha^{-1}\MW$ and $\MH^{\prime}=\alpha\MH$. 
However, this is generally not the case with additional constraints or regularizations.

\paragraph{Why is this problem challenging?}
In general, Poisson Non Negative Matrix factorization, i.e. minimizing $\mathcal{L}_{\MY}\left(\MW,\MH\right)$ is challenging because it is not
gradient Lipschitz\footnote{The function $\mathcal{L}_{\MY}$ is, according to the definition, gradient Lipschitz because of the constraint $\MW,\MH\geq\epsilon$. However, in practice, the constant $\epsilon$ is chosen to be so small that its actual Lipschitz constant is too large to be useful.} despite being differentiable for $\MW,\MH>0$.
In practice, this implies that there exists no fixed learning rate that ensures convergence of gradient descent and that line search would have to be used.
For that reason, solving the more general problem \eqref{eq:general-problem} is a difficult task, and to the best of our knowledge, there are no existing algorithms that can be directly applied to it. 
Although there exist many algorithms to solve the traditional Poisson NMF problem~\cite{hien2021algorithms,lee1999learning,lee2001algorithms,kim2014algorithms,gillis2014and}, none of them focuses on the regularized case (see the related work Section~\ref{sec:Related-work}). 
Our main contribution is to fill this gap by providing multiple algorithms that minimize \eqref{eq:general-problem} for a wide range of regularizations$R\left(\MW,\MH\right)$ described by \eqref{eq:regularisation-general} and some linear constraints.

\paragraph{Our approach}
A natural approach to minimize \eqref{eq:general-problem} is to optimize for each variable $\MW,\MH$ at a time. 
For example, Block Coordinate Descent (BCD) can be expressed as
\begin{equation}
\MW^{t+1} \leftarrow \minimize_{\MW \in \mathcal{C}_{\MW}}\mathcal{L}\left(\MW,\MH^{t}\right),\label{eq:fullminupdateW}
\end{equation}
\begin{equation}
\MH^{t+1} \leftarrow \minimize_{\MH \in \mathcal{C}_{\MH}}\mathcal{L}\left(\MW^{t+1},\MH\right).\label{eq:fullminupdateH}
\end{equation}
This type of iterative scheme ensures that the loss does not increase between iterations and has been successfully used for the L2 case. However, in the Poisson case, there is no closed-form solution for problems \eqref{eq:fullminupdateW} and \eqref{eq:fullminupdateH}, making this approach generally computationally expensive.

Fortunately, in practice, one does not need to find the global minima of \eqref{eq:fullminupdateW} and \eqref{eq:fullminupdateH} at each iteration. Instead, using a Block Successive Minimization (BSUM) algorithm~\cite{razaviyayn2013unified}, it is sufficient to minimize approximations of $\mathcal{L}$ which are locally tight upper bounds of $\mathcal{L}$.
To use the BSUM efficiently, these approximation functions need to have three properties: (1) to satisfy the hypotheses of the BSUM Theorem \cite[Theorem 2]{razaviyayn2013unified}, (2) to be as tight as possible, and (3) to be easy to optimize, i.e. to lead to a closed-form solution for each subproblem.

Our contributions can be summarized as follows. We show how regularized Poisson NMF can be efficiently solved using BSUM. We derive tight upper bounds for multiple regularizers and compare our approach with traditional algorithms. We also propose a simple way to introduce linear constraints into the problem and suggest using line search to build even tighter upper bounds. Finally, we propose multiple algorithms for regularized Poisson NMF and conduct numerical simulations to demonstrate the effectiveness of our approach.

\paragraph{Outline of this contribution}

In Section \ref{sec:Related-work}, we provide a review of the literature. In Section \ref{sec:Preliminaries}, we clarify the notation and provide the necessary definitions for the BSUM Theorem \cite[Theorem 2]{razaviyayn2013unified} that will be used for the convergence of our algorithm. In Section \ref{sec:Subproblem-optimisation}, we develop convenient approximations of the objective and regularization functions leading to sub-problems with closed-form solutions. In Section \ref{sec:Generalized-simplex-constraint}, we explore how to modify the optimization scheme to introduce generalized simplex constraints. In Section \ref{sec:Algorithms}, we present our algorithms. Section \ref{sec:application} provides numerical applications of our algorithm, and Section \ref{sec:Conclusion} concludes this contribution.

\section{Related work}
\label{sec:Related-work}

\paragraph{Applications of Poisson Distribution likelihood Maximization}
The maximization of likelihood in a Poisson distribution finds relevance in various applications, prompting the resolution of the problem outlined in Equation \eqref{eq:general-problem}.

Many such applications arise in the domain of physical constrained linear unmixing problems~\cite{kannan2018deep}. Some noteworthy instances encompass:
1. Scanning transmission electron microscopy (STEM) \cite{teurtrie2024stem, shiga2016sparse, cacovich2018unveiling, jany2017retrieving},
2. Hyperspectral Raman and optical imaging \cite{kano2016hyperspectral, wabuyele2005hyperspectral, fu2013hyperspectral},
3. Tensor SVD applied to denoise atomic-resolution 4D scanning transmission electron microscopy \cite{zhang2020denoising},
and 4. Non-local Poisson PCA denoising \cite{salmon2014poisson, yankovich2016non}.
It is noteworthy that many of these applications predominantly employ the L2 case, which offers a comparatively simpler solution. As a result, data is often renormalized to convert Poisson distributions into Gaussian distributions \cite{kotula2003automated}. Nevertheless, the efficacy of these applications could be significantly enhanced by the development of algorithms tailored explicitly for the Poisson case.

Furthermore, Problem \eqref{eq:general-problem} also surfaces in hyperspectral image denoising, where noise is assumed to follow a Poisson distribution \cite{zou2018restoration, ye2014multitask}. In the domain of text mining, the Poisson distribution assumption is frequently utilized for modeling word occurrences based on latent variables such as categories, leading to the problem formulation depicted by \eqref{eq:general-problem} \cite{hofmann1999probabilistic, mei2005discovering}. Additionally, within the context of recommender systems, several matrix factorization problems can be reformulated into the structure of \eqref{eq:general-problem} \cite{singh2008relational, gopalan2013scalable, koren2009matrix}.

\paragraph{Other Optimization Approaches}

The literature offers a limited number of optimization methods suitable for addressing the problem presented by Equation \eqref{eq:general-problem}. This constraint arises from the requirement of many optimization techniques to have a continuously differentiable gradient Lipschitz function. Examples of such techniques include gradient descent \cite{panageas2019first}, perturbed gradient descent \cite{jin2017escape}, nonlinear conjugate gradient method \cite{dai1999nonlinear}, various proximal point minimization algorithms \cite{komodakis2015playing}, and second-order methods like the Newton-CG algorithms \cite{royer2020newton,royer2018complexity}.

One potentially attractive direction is the utilization of Proximal Alternating (Linearized) Minimization (PALM) \cite{bolte2014proximal} or Proximal Alternating Minimization (PAM) \cite{attouch2010proximal}. These algorithms are designed to solve problems of the form:
\[
\minimize_{\MW,\MH}R_W\left(\MW\right)+R_H\left(\MH\right)+\mathcal{L}\left(\MW,\MH\right)
\]
PALM employs a Gauss-Seidel iteration scheme, consisting of the following sub-problems:
\begin{align*}
    \bm{\MW}^{k+1} &  = \argmin_{\MW}  R_W \left(\MW \right)+\mathcal{L}\left(\MW ,\MH^k\right) + c_W \left\| \MW - \MW^k \right\|_2^2  \\
    \bm{\MH}^{k+1} & = \argmin_{\MH}  R_H\left(\MH\right)+\mathcal{L}\left(\MW^{k+1},\MH\right) + c_H \left\| \MH - \MH^k \right\|_2^2
\end{align*} 
Unfortunately, PALM requires the objective function $\mathcal{L}$ to possess a Lipschitz gradient, which is not the case in our scenario. Additionally, the Gauss-Seidel iterations generally lack a closed-form solution, resulting in a slow algorithm with sub-iterations.

To overcome the non-Lipschitz gradient issue, Bregman gradient descent (B-GD) \cite[Algorithm 1.1]{li2019provable} can be considered. This type of algorithm has been extended to alternating minimization \cite[Algorithm 1.3 and 1.4]{li2019provable}. Such an approach can be adapted to our case, as the objective function \eqref{eq:general-problem} exhibits relative smoothness for most regularization scenarios (see \eqref{eq:relative-smoothness}). However, this optimization scheme involves non-tight majorization functions, leading to slow convergence, as discussed in Section~\ref{subsec:Tight-majorizing-functions}.

Given the presence of two blocks of variables, Block Coordinate Descent (BCD) algorithms \cite{tseng2001convergence} naturally emerge as a potential solution. In fact, previous work \cite{kim2014algorithms} demonstrates that many existing approaches can be viewed as Block Coordinate Descent (BCD) problems. However, a primary challenge with BCD lies in its propensity to necessitate full minimization of the sub-problems, which proves to be challenging in the Poisson case. In the L2 case, the subproblems often have closed-form solutions. To address this concern, we explore BSUM algorithms \cite{razaviyayn2013unified} in this study, a generalization of BCD that avoids the requirement for full minimization of the subproblems, instead using upper bounds for the objective function.

\paragraph{Non-Negative Matrix Factorization}
Non-Negative Matrix Factorization (NMF) algorithms have been extensively studied, and for a comprehensive review, one can refer to \cite{gillis2014and}. Initially formulated as Positive Matrix Factorization for the Gaussian (L2-NMF) case by \cite{paatero1997least}, NMF has seen numerous algorithmic developments. In the L2 case, popular approaches include the Alternating Nonnegative Least Squares (ANLS) framework \cite{kim2008nonnegative, kim2011fast, lin2007projected} and the Hierarchical Alternating Least Squares (HALS) method \cite{cichocki2009fast, cichocki2007hierarchical}. As for the Poisson case, known as KL NMF, the first algorithm using Multiplicative Updates (MU) was proposed by \cite{lee1999learning}, with later demonstrations of its convergence provided in \cite{lee2001algorithms} for both Poisson and Gaussian cases. A more rigorous convergence analysis is presented in \cite{lin2007convergence}.

Considering the specific problem of Poisson KL-NMF, there have been a few notable contributions. For instance, \cite{sun2014alternating} employed the Alternating Direction Method of Multipliers (ADMM) with the variable change $X=WH$. The Primal Dual algorithm, based on the framework from Chambolle-Pock, was explored by \cite{yanez2017primal}. Moreover, \cite{hien2021algorithms} conducted a comparative study of various optimization algorithms for KL NMF, including MU \cite{lee2001algorithms}, ADMM \cite{sun2014alternating}, Primal Dual \cite{yanez2017primal}, and Cyclic Coordinate Descent Method \cite{hsieh2011fast}. They also introduced three new algorithms for KL-NMF, namely Block Mirror Descent Method, A Scalar Newton-Type Algorithm, and A Hybrid SN-MU Algorithm.

In the introduction, we mentioned that there are few contributions that address the problem of regularized NMF, with many of them focusing on the L2 case. \cite{kim2014algorithms} demonstrated how many existing works can be cast as Block Coordinate Descent (BCD) problems, allowing the derivation of MU update rules for different regularizers, such as L1 for sparsity. However, their work is limited to L2 NMF. Xu et al.~\cite{xu2013block} proposed a general optimization scheme for block multiconvex optimization using block coordinate descent, which can accommodate regularization on each block. Although applied to L2-NMF, such an approach may lead to algorithms with sub-iterations.
In the context of L2-NMF with sparsity constraints, \cite{teboulle2020novel} presented an approach to address this scenario. Additionally, \cite{taslaman2012framework} introduced a framework for handling L2-NMF with Lipschitz regularizers, akin to our term $s_L$. Other forms of regularization have also been explored, such as graph-based \cite{cai2010graph} or simplex constraint \cite{hofmann1999probabilistic}.

Regarding the specific problem of regularized KL loss, \cite{he2016fast} provided a notable contribution. However, their work focused solely on the subproblems of the NMF problem, rather than addressing the NMF problem itself. Notably, one could potentially employ a similar approach to solve the subproblems of \eqref{eq:general-problem} using BCD. Nonetheless, this would result in a less efficient algorithm with sub-iterations.

\section{Preliminaries\label{sec:Preliminaries}}

\subsection{Notation}


We reserve capital letters for matrices and vectors, e.g., $\Mat{A},\bm{a}$. We use $\bm{a_{i}}$ to refer to the $i$-th numbered vector, and $a_{i}$ to denote the $i$-th element of vector $\bm{a}$. 
The $j$-th element of vector $\bm{a}_{i}$ or the element at the $i$-th row and $j$-th column of matrix $\Mat{A}$ is denoted as $a_{ij}$.

$\Mat{A}\geq0$ and $\bm{a}\geq0$ indicate that all entries of matrix $\Mat{A}$ or vector $\bm{a}$ are greater than or equal to $0$, i.e., $a_{ij}\geq0$ for all $i,j$. 
$\Mat{A}^{\top},\bm{a}^{\top}$ represent the transpose of $\Mat{A}$ and $\bm{a}$, respectively. 
We use $\Mat{A}^{t}$ to denote $\Mat{A}$ at step $t$. 
$\Mat{A}^{t}{}^{\top}$ denotes the transpose of $\Mat{A}^{t}$. 
$\circ$ and $\oslash$ denote elementwise multiplication (also known as the Hadamard product) and division for matrices, respectively. 
For example, $[\Mat{A}\circ\Mat{B}]_{ij}=a_{ij}b_{ij}$ and $[\Mat{A}\oslash\Mat{B}]_{ij}=\frac{a_{ij}}{b_{ij}}$.

As mentioned in the introduction, the matrix to be factorized is generally denoted as $\MY\approx\MW\MH$, where $\MW$ and $\MH$ are its factors. 
We will use $\bm{w},\bm{h}$ as the vectorized versions of $\MW,\MH$, while $\bm{w}_{i},\bm{h}_{i}$ will denote the $i$-th column of $\MW,\MH$. 
$x,\bm{x}$ are general variables that can replace either $\bm{w}$ or $\bm{h}$. 
We use $\mathcal{L}\left(\MW,\MH\right)$ as the general loss function, and $f\left(\bm{x}\right)$ is broadly used to denote a multivariate scalar function.

Finally, we commonly employ calligraphic notation for variable domains. Let $\mathcal{X}$ serve as a generic domain for the variable $x$. In practical terms, it is frequently defined by the $\geq \epsilon$ constraint, specifically as $\mathcal{X}=\left\{\bm{x}\in\Rbb^{m}\vert\bm{x}\geq\epsilon\right\}$. Additionally, we utilize $C_w$ and $C_h$ to represent the domains of $\bm{w}$ and $\bm{h}$, respectively.

\subsection{Definitions}
\label{subsec:Definitions}

In this contribution, we consider the loss function $\mathcal{L}\left(\bm{w},\bm{h}\right)$ with two blocks of variables: $\bm{w}\in\mathcal{C}_{w}$ and $\bm{h}\in\mathcal{C}_{h}$, where $\mathcal{C}_{w}\subset\Rbb^{m_{w}}$ and $\mathcal{C}_{h}\subset\Rbb^{m_{h}}$ are both non-empty convex sets. Here, $\bm{w}$ and $\bm{h}$ correspond to the vectorized versions of the two matrices $\MW$ and $\MH$, respectively. Therefore, $\mathcal{C}_{w}$ and $\mathcal{C}_{h}$ often correspond to the sets $\bm{w}\geq\epsilon$ and $\bm{h}\geq\epsilon$ with $\epsilon>0$. Let us use $\bm{z}=\left[\bm{w},\bm{h}\right]$ to denote all the variables. We have $\bm{z}\in\mathcal{C}=\mathcal{C}_{w}\times\mathcal{C}_{h}\subset\Rbb^{m}=\Rbb^{m_{w}}\times\Rbb^{m_{h}}$, where the total dimension of the problem is $m=m_{w}+m_{h}$.
\begin{definition}[Directional derivative]
Let $\mathcal{L}:\mathcal{C}\rightarrow\Rbb$ be a scalar function, where $\mathcal{C}\subset\Rbb^{m}$ is a convex set. The directional derivative of $\mathcal{L}$ at point $\bm{x}$ in the direction $\bm{d}$ is defined by
\[
\mathcal{L}^{\prime}\left(\bm{z};\bm{d}\right)\coloneqq\lim_{\lambda\downarrow0}\frac{\mathcal{L}\left(\bm{z}+\lambda\bm{d}\right)-\mathcal{L}\left(\bm{z}\right)}{\lambda}
\]
\end{definition}
Note that when $\mathcal{L}$ is differentiable, $\mathcal{L}'(\bm{z};\bm{d}) = \nabla\mathcal{L}(\bm{z})\bm{d}^{\top}$ since $\bm{d}$ and $\bm{z}$ are row vectors. In this contribution, almost all functions are differentiable on the domain of interest.
\begin{definition}[Coordinatewise Minimum]
The point $\bm{z} = [\bm{w},\bm{h}]\in\mathcal{C}$ is a coordinatewise minimum of a function $\mathcal{L}$ if 
\[
\mathcal{L}\left(\bm{w}+\bm{d}_{w},\bm{h}\right)\geq\mathcal{L}\left(\bm{w},\bm{h}\right)\hspace{1em}\forall\bm{d}_{w}\in\mathbb{R}^{m_{w}}\hspace{1em}\text{with}\hspace{1em}\bm{w}+\bm{d}_{w}\in\mathcal{C}_{w}
\]
\[
\mathcal{L}\left(\bm{w},\bm{h}+\bm{d}_{h}\right)\geq\mathcal{L}\left(\bm{w},\bm{h}\right)\hspace{1em}\forall\bm{d}_{h}\in\mathbb{R}^{m_{h}}\hspace{1em}\text{with}\hspace{1em}\bm{h}+\bm{d}_{h}\in\mathcal{C}_{h}
\]
\end{definition}
A coordinatewise minimum is a natural termination point for an alternating minimization algorithm. However, it is important to note that a coordinatewise minimum is not equivalent to a local minimum, as it does not guarantee minimality in all directions. Figure \ref{fig:not_a_local_minimum} (left) provides a counterexample illustrating this.

Another significant concept is the notion of a stationary point, where the gradient is non-negative in all directions.
\begin{definition}[Stationary Points of a function]
Let $\mathcal{L}:\mathcal{C}\rightarrow\Rbb$ be a scalar function, where $\mathcal{C}\subset\Rbb^{m}$ is a convex set. A point $\bm{x}$ is a stationary point of $\mathcal{L}$ if
\[
\mathcal{L}^{\prime}\left(\bm{z};\bm{d}\right)\geq0\hspace{1em}\forall\bm{d}\vert\bm{z}+\bm{d}\in\mathcal{X}
\]
\end{definition}

We emphasize that a stationary point is not equivalent to a \emph{strict} local minimum as there might be directions where the directional derivative equals 0.
For example, in the simple function $f([\bm{w},\bm{h}]) = (wh-2)^2$, the point $[\bm{w},\bm{h}] = [\sqrt{2},\sqrt{2}]$ has a zero derivative in the direction $[1,-1]$, which corresponds to rescaling the solution as $[\alpha,1/\alpha]$.
Even worse, a stationary point is not necessarily a local minimum, even if it is a coordinatewise minimum, as shown in Figure \ref{fig:not_a_local_minimum} (a). Here, in the diagonal directions, the directional derivative equals 0 but the function is concave in this direction. 

When it comes to the Poisson loss function, there is, at least, a continuous set of local minimas corresponding to rescaling the solution.
This is illustrated in Figure \ref{fig:not_a_local_minimum} (b). Note that the introduction of regularization or constraints can lead to strict local minima, as shown in Figure \ref{fig:not_a_local_minimum} (c).

In this contribution, we prove convergence to a coordinatewise minimum that is also a stationary point. To accomplish this, we will consider a class of functions that are regular at their coordinatewise minima.

\begin{definition}[Regularity of a function at a point]
The function $\mathcal{L}:\mathcal{C}\rightarrow\Rbb$ is said to be regular at the point $\bm{z}\in\mathcal{C}$ if $\mathcal{L}'(\bm{z};\bm{d}) \geq 0$ for all $\bm{d} = [\bm{d}_w,\bm{d}_h]\in\Rbb^{m}$ such that $\mathcal{L}'(\bm{z};[\bm{d}_w,\bm{0}]) \geq 0$ and $\mathcal{L}'(\bm{z};[\bm{0},\bm{d}_h]) \geq 0$.
\end{definition}

\begin{restatable}{lemma}{contiuousdifferentiableregular}
\label{lem:contiuous-differentiable-regular}
Continuously differentiable functions are regular at their coordinatewise minimums.
\end{restatable}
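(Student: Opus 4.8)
The plan is to exploit the single structural fact recorded just above the directional-derivative definition: for a continuously differentiable $\mathcal{L}$ one has $\mathcal{L}'(\bm{z};\bm{d}) = \nabla\mathcal{L}(\bm{z})\bm{d}^{\top}$, so the map $\bm{d}\mapsto\mathcal{L}'(\bm{z};\bm{d})$ is a \emph{linear} functional on $\Rbb^{m}$. This linearity is the entire crux of the argument; everything else is bookkeeping over the two coordinate blocks.

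First I would fix a point $\bm{z}=[\bm{w},\bm{h}]\in\mathcal{C}$ (in particular a coordinatewise minimum) and take an arbitrary direction $\bm{d}=[\bm{d}_{w},\bm{d}_{h}]\in\Rbb^{m}$ satisfying the two hypotheses of the regularity definition, namely $\mathcal{L}'(\bm{z};[\bm{d}_{w},\bm{0}])\geq0$ and $\mathcal{L}'(\bm{z};[\bm{0},\bm{d}_{h}])\geq0$. The goal is to deduce $\mathcal{L}'(\bm{z};\bm{d})\geq0$, which is exactly what regularity at $\bm{z}$ demands.

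Next I would decompose the direction additively as $\bm{d}=[\bm{d}_{w},\bm{0}]+[\bm{0},\bm{d}_{h}]$ and apply linearity to obtain $\mathcal{L}'(\bm{z};\bm{d})=\mathcal{L}'(\bm{z};[\bm{d}_{w},\bm{0}])+\mathcal{L}'(\bm{z};[\bm{0},\bm{d}_{h}])$. Concretely, splitting the gradient over the two blocks as $\nabla\mathcal{L}(\bm{z})=[\nabla_{w}\mathcal{L}(\bm{z}),\nabla_{h}\mathcal{L}(\bm{z})]$, the two block derivatives are the inner products $\langle\nabla_{w}\mathcal{L}(\bm{z}),\bm{d}_{w}\rangle$ and $\langle\nabla_{h}\mathcal{L}(\bm{z}),\bm{d}_{h}\rangle$, and the full directional derivative is precisely their sum. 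Since both summands are non-negative by hypothesis, the sum is non-negative, establishing regularity at $\bm{z}$.

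I expect no substantial obstacle here: the only ingredient is the splitting of the gradient inner product over the two blocks, i.e. the linearity of $\mathcal{L}'(\bm{z};\cdot)$ guaranteed by differentiability. I would add the remark that the coordinatewise-minimum hypothesis is in fact not used by this argument — a continuously differentiable function is regular at \emph{every} point of $\mathcal{C}$ — but phrasing the lemma at coordinatewise minima is what is needed downstream, where combining regularity with the coordinatewise-minimum condition (each block derivative being non-negative along feasible directions) yields that such points are stationary in the sense of the preceding definition.
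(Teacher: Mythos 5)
Your proof is correct and takes essentially the same route as the paper's: both arguments rest entirely on the linearity of $\bm{d}\mapsto\nabla\mathcal{L}(\bm{z})\bm{d}^{\top}$ and the block decomposition $\bm{d}=[\bm{d}_{w},\bm{0}]+[\bm{0},\bm{d}_{h}]$. Your side remark that the coordinatewise-minimum hypothesis is not actually needed (continuous differentiability gives regularity at every point) is accurate and consistent with how the paper's proof actually uses the hypotheses.
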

Proof is provided in Appendix~\ref{app:proof-lem-continuous-differentiable-regular}. 
Lemma \ref{lem:contiuous-differentiable-regular} plays a crucial role, as it ensures that the coordinate-wise minimum we converge to in Theorem \ref{thm:TBSUM-convergence} is also a stationary point. In this work, we assume the regularizer to be continuously differentiable on the domain.

\begin{figure}[ht!]
    \centering
    \begin{subfigure}{0.3\textwidth}
        \includegraphics[width=\textwidth]{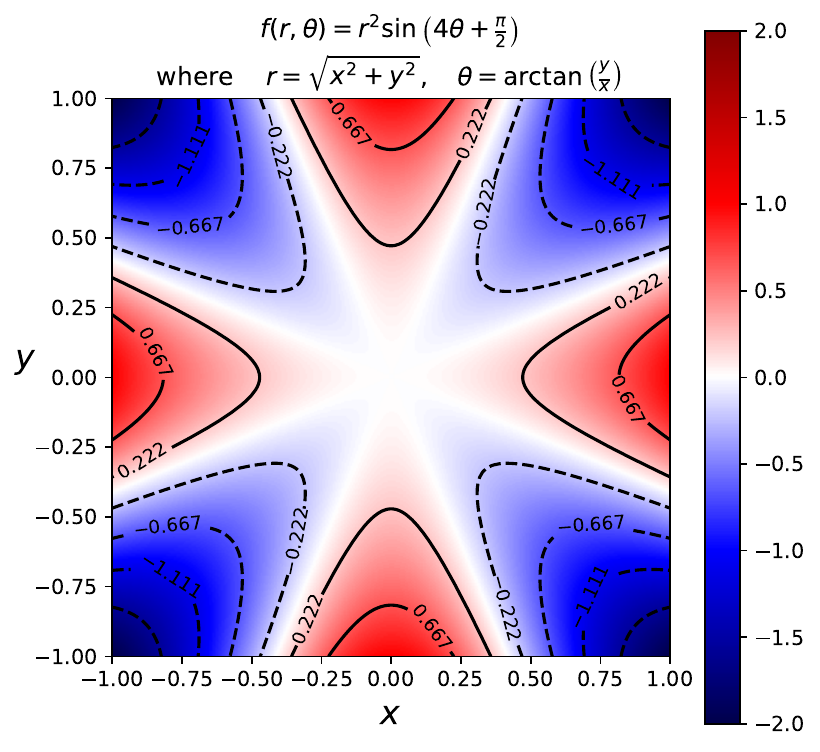}
        \caption{}
    \end{subfigure}
    \hfill
    \begin{subfigure}{0.3\textwidth}
        \includegraphics[width=\textwidth]{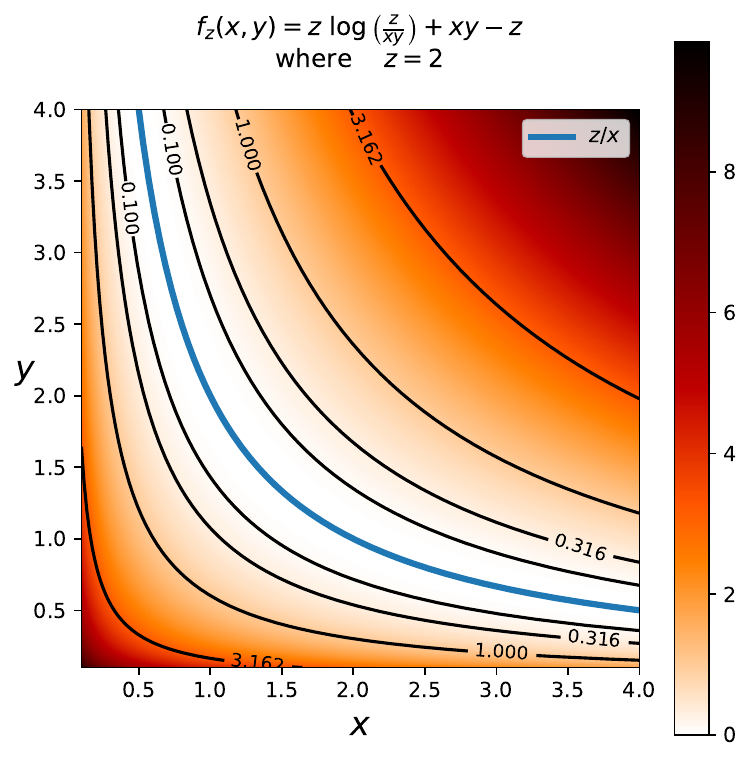}
        \caption{}
    \end{subfigure}
    \hfill
    \begin{subfigure}{0.3\textwidth}
        \includegraphics[width=\textwidth]{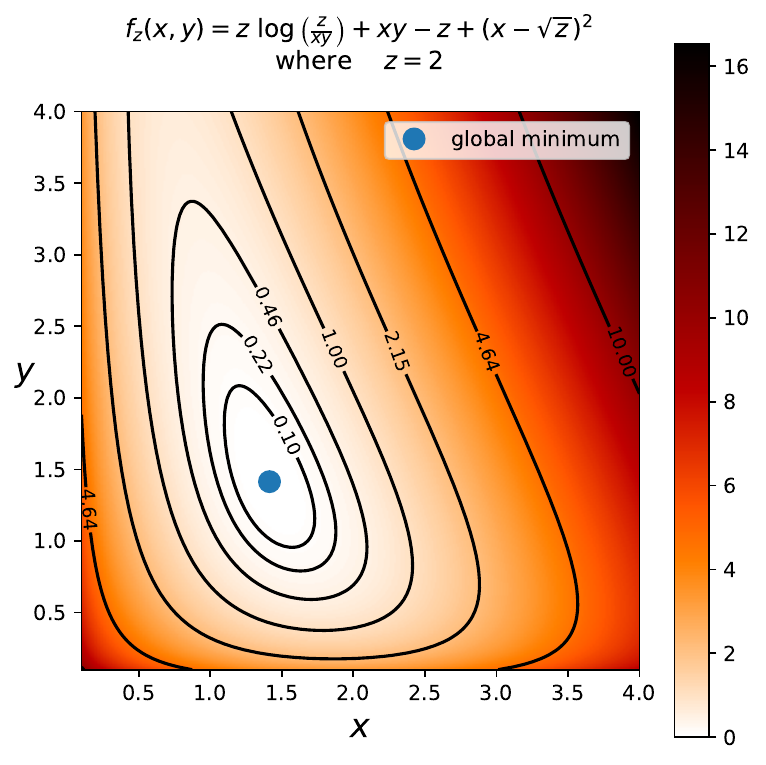}
        \caption{}
    \end{subfigure}


\caption{\label{fig:not_a_local_minimum} (a) A function where the point $(0,0)$ is a coordinatewise minimum along the $x$ and $y$ directions but is not a local minimum. (b) The blue line represents the set of all global minima for the one-dimensional case of \eqref{eq:general-poisson-loss}, where all coordinatewise minima are also stationary points. (c) We show the effect of adding a quadratic regularization term to \eqref{eq:general-poisson-loss}, ensuring the uniqueness of the global minimum.}
\end{figure}

\subsection{Approximation functions}
In order to facilitate optimization algorithms, it is beneficial to work with approximation functions that majorize or approximate the objective function at a given point. One commonly used class of approximation functions is known as \emph{first-order majorization} functions. These functions provide a convenient framework for constructing surrogates and facilitating optimization. We adopt the definition of first-order majorization functions from \cite{razaviyayn2013unified}.
\begin{definition}
\label{def:first-order-majorization-function}\cite[Assumption 1]{razaviyayn2013unified}
A function $g(\bm{x},\bm{x}^{t})$ is said to be a \emph{first-order majorization} of $f$ at the point $\bm{x}^{t}$ if it satisfies the following properties:
\begin{flalign*}
A.1 \hspace{3em} & 
g(\bm{x},\bm{x}^{t}) \geq f(\bm{x}) \hspace{1em} \forall \bm{x},\bm{x}^{t}\in\mathcal{X},\\
A.2 \hspace{3em}&
g(\bm{x}^{t},\bm{x}^{t}) = f(\bm{x}^{t}) \hspace{1em} \forall \bm{x}^{t}\in\mathcal{X},\\
A.3 \hspace{3em}& 
g^{\prime}(\bm{x},\bm{x}^{t};\bm{d})\big\vert_{\bm{x}=\bm{x}^{t}} = f^{\prime}(\bm{x}^{t};\bm{d}) \hspace{1em} \forall \bm{d} \text{ such that } \bm{x}^{t}+\bm{d}\in\mathcal{X},\\
A.4 \hspace{3em}& 
g(\bm{x},\bm{x}^{t}) \text{ is continuous in } (\bm{x},\bm{x}^{t}). 
\end{flalign*}
\end{definition}
It is worth noting that for continuously differentiable functions, the third statement can be equivalently expressed as $\nabla_{\bm{x}}g(\bm{x}^{t},\bm{x}^{t})=\nabla f(\bm{x}^{t})$. Although the definition of first-order majorization functions resembles the concept of surrogate functions introduced in \cite[Definition 2.2]{mairal2015incremental}, the additional requirement for a surrogate function is that $g(\bm{x},\bm{x}^{t})-f(\bm{x})$ is L gradient Lipschitz as defined in \eqref{eq:gradient-lipschitz}. 
Importantly, all majorization functions defined in the following Section~\ref{subsec:Finding-appropriate-surrogate} satisfy this condition and can thus serve as majorization functions.

Conveniently, majorization functions can be built term by term, leveraging their additivity property. This property allows us to combine multiple majorization functions to obtain a new majorization function. 
\begin{lemma}
First-order majorization functions are additive. If $g_{1}(\bm{x},\bm{x}^{t})$ and $g_{2}(\bm{x},\bm{x}^{t})$ majorize $f_{1}(\bm{x})$ and $f_{2}(\bm{x})$ at $\bm{x}^{t}$, respectively, then $g_{1}(\bm{x},\bm{x}^{t})+g_{2}(\bm{x},\bm{x}^{t})$ majorizes $f(\bm{x})=f_1(\bm{x})+f_2(\bm{x})$ at $\bm{x}^{t}$.
\label{lem:First-order-majorization-additivity}
\end{lemma}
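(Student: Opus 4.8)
The plan is to verify the four defining properties (A.1--A.4) of Definition~\ref{def:first-order-majorization-function} directly for the candidate majorizer $g(\bm{x},\bm{x}^{t}) := g_{1}(\bm{x},\bm{x}^{t})+g_{2}(\bm{x},\bm{x}^{t})$, using the fact that each $g_{i}$ already satisfies these properties relative to $f_{i}$. Since the hypotheses give us each property for $g_1$ and $g_2$ individually, the whole proof reduces to checking that each property is preserved under addition. This is essentially routine, so I would present it as a short enumeration over the four conditions.

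\textbf{A.1 and A.2 (inequality and tightness).} For A.1, I would add the two hypothesis inequalities $g_{1}(\bm{x},\bm{x}^{t})\geq f_{1}(\bm{x})$ and $g_{2}(\bm{x},\bm{x}^{t})\geq f_{2}(\bm{x})$, valid for all $\bm{x},\bm{x}^{t}\in\mathcal{X}$, to obtain $g_{1}+g_{2}\geq f_{1}+f_{2}=f$. For A.2, I would add the two equalities at the base point to get $g_{1}(\bm{x}^{t},\bm{x}^{t})+g_{2}(\bm{x}^{t},\bm{x}^{t})=f_{1}(\bm{x}^{t})+f_{2}(\bm{x}^{t})=f(\bm{x}^{t})$.

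\textbf{A.3 (first-order agreement).} The key observation is that the directional derivative is additive in the function: $(g_{1}+g_{2})'(\bm{x},\bm{x}^{t};\bm{d})=g_{1}'(\bm{x},\bm{x}^{t};\bm{d})+g_{2}'(\bm{x},\bm{x}^{t};\bm{d})$, which follows because the limit defining the directional derivative is linear in its argument (the difference quotient of a sum is the sum of difference quotients, and the limit of a sum is the sum of limits whenever both limits exist, which they do by hypothesis). Evaluating at $\bm{x}=\bm{x}^{t}$ and applying A.3 for each $g_{i}$ gives $g_{1}'(\bm{x}^{t};\bm{d})+g_{2}'(\bm{x}^{t};\bm{d})=f_{1}'(\bm{x}^{t};\bm{d})+f_{2}'(\bm{x}^{t};\bm{d})=f'(\bm{x}^{t};\bm{d})$, again using linearity of the directional derivative applied to $f=f_{1}+f_{2}$. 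Finally, A.4 follows since the sum of two functions continuous in $(\bm{x},\bm{x}^{t})$ is continuous in $(\bm{x},\bm{x}^{t})$.

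The main (and really the only) subtlety I anticipate is in A.3, namely justifying that the directional derivative distributes over a sum. The cleanest route, which I would state explicitly, is that existence of both $g_{1}'(\bm{x}^{t};\bm{d})$ and $g_{2}'(\bm{x}^{t};\bm{d})$ as finite limits (guaranteed by A.3 holding for each summand) is exactly what is needed to split the limit of the sum; in the differentiable setting emphasized in the excerpt this is immediate from $\nabla(g_1+g_2)=\nabla g_1+\nabla g_2$ and the identity $g'(\bm{z};\bm{d})=\nabla g(\bm{z})\bm{d}^{\top}$ noted after the directional-derivative definition. No genuine obstacle arises, and the result extends by induction to any finite sum of majorizers.
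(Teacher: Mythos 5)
Your proof is correct and follows the same approach as the paper, which simply asserts in one line that additivity preserves each property of Definition~\ref{def:first-order-majorization-function}; you have merely spelled out the property-by-property verification that the paper leaves implicit. The extra care you take with A.3 (splitting the limit in the directional derivative) is sound and fills in the only step that is not completely trivial.
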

\begin{proof}
The additivity property preserves each property of \eqref{def:first-order-majorization-function}.
\end{proof}
Lemma \ref{lem:First-order-majorization-additivity} provides a valuable tool for constructing majorization functions by combining simpler majorization functions. Additionally, when proving that a function is majorizing, it is often unnecessary to explicitly demonstrate the equality of partial derivatives or gradients at $\bm{x}^{t}$. Instead, in the case of differentiable functions, it is typically sufficient to establish the first two properties (A.1 and A.2). According to \cite[Proposition 1]{razaviyayn2013unified}, properties A.3 and A.4 follow as a consequence. Intuitively, one can observe that the continuity of the gradient ensures that the majorization function $g$ shares the tangent spaces with $f$ at the point $\bm{x}^{t}$.

    
\subsection{Two Blocks Successive Minimization (TBSUM)}


The TBSUM algorithm is designed to solve the following problem:
\begin{equation}
\begin{split}
\minimize_{\bm{h},\bm{w}} ~ & \mathcal{L}\left(\bm{w},\bm{h}\right) \\
\text{such that}~~ & \bm{h}\in\mathcal{C}_{h},\bm{w}\in\mathcal{C}_{w}
\end{split}\label{eq:tbsum-opt-problem}
\end{equation}
It relies on two first-order majorizing functions: $g_w(\bm{w},\bm{w}^t,\bm{h}^t)$ and $g_h(\bm{h},\bm{h}^t,\bm{w}^t)$, which majorize $\mathcal{L}(\bm{w},\bm{h})$ at $\left(\bm{w}^t,\bm{h}^t\right)$ for all $\bm{w}^t\in\mathcal{C}_w$ and $\bm{h}^t\in\mathcal{C}_h$. The construction of these functions will be presented in Section \ref{sec:Subproblem-optimisation}. The TBSUM algorithm, outlined in Algorithm \ref{alg:TBSUM}, alternates between minimizing $g_w$ and $g_h$. It is assumed that the subproblem solutions are unique.
Theorem~\ref{thm:TBSUM-convergence} establishes the convergence of the TBSUM algorithm, which is a variant of the algorithm presented in \cite[Theorem 2a]{razaviyayn2013unified} adapted for solving the specific problem at hand.
\begin{algorithm}[h!]
\caption{TBSUM: Two-Block Successive Minimization Algorithm \label{alg:TBSUM}}
\begin{algorithmic}[1]
\Require Initialize the variables to a feasible point $\bm{w}^{0}\in\mathcal{C}_{w}$, $\bm{h}^{0}\in\mathcal{C}_{h}$, and set $t=0$
\Repeat
\State $\bm{w}^{t+1} \gets \argmin_{\bm{w}\in\mathcal{C}_{w}}g_{w}\left(\bm{w},\bm{w}^{t},\bm{h}^{t}\right)$
\State $\bm{h}^{t+1} \gets \argmin_{\bm{h}\in\mathcal{C}_{h}}g_{h}\left(\bm{h},\bm{h}^{t},\bm{w}^{t+1}\right)$
\State $t \gets t+1$
\Until{some convergence criterion is met}
\end{algorithmic}
\end{algorithm}
\begin{theorem}[Convergence of TBSUM Algorithm~\ref{alg:TBSUM}\label{thm:TBSUM-convergence}]
Given two quasi-convex first order majorizing functions $g_{h}\left(\bm{w},\bm{w}^{t},\bm{h}^{t}\right)$
and $g_{w}\left(\bm{h},\bm{h}^{t},\bm{w}^{t+1}\right)$ of $\mathcal{L}\left(\bm{w},\bm{h}\right)$
at $\left(\bm{w}^{t},\bm{h}^{t}\right),\forall\bm{w}^{t},\bm{h}^{t}\in\mathcal{C}_{w}\times\mathcal{C}_{h}$.
Furthermore assuming that the two subproblems in the TBSUM Algorithm
\ref{alg:TBSUM} have unique solutions for any points $\bm{w}^{t}\in C_{w}$,
$\bm{h}^{t}\in C_{h}$. Then, every limit point $\bm{\bm{z}=\left[\bm{w},\bm{h}\right]}$
of the iterates generated by the TBSUM Algorithm \ref{alg:TBSUM}
is a coordinatewise minimum of \eqref{eq:tbsum-opt-problem}. In addition,
if $\mathcal{L}$ is regular at any point $\bm{z}\in\mathcal{C}$,
then $\bm{z}$ is a stationary point of \eqref{eq:tbsum-opt-problem}.
\end{theorem}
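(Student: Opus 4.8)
The plan is to follow the block-successive-minimization template of \cite[Theorem 2]{razaviyayn2013unified}, specialized to the two-block Gauss--Seidel scheme of Algorithm~\ref{alg:TBSUM}, in three stages: (i) a monotone decrease of the objective along the iterates, (ii) convergence of the intermediate updates to the limit point using the uniqueness hypothesis, and (iii) transfer of surrogate-optimality back to $\mathcal{L}$ to obtain the coordinatewise first-order conditions, followed by the regularity step.

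First I would establish that $\mathcal{L}(\bm{w}^t,\bm{h}^t)$ is non-increasing. Chaining the majorization property A.1, the minimality of each update, and the tightness A.2 gives
\[
\mathcal{L}(\bm{w}^{t+1},\bm{h}^{t+1}) \le g_h(\bm{h}^{t+1},\bm{h}^t,\bm{w}^{t+1}) \le g_h(\bm{h}^t,\bm{h}^t,\bm{w}^{t+1}) = \mathcal{L}(\bm{w}^{t+1},\bm{h}^t) \le \cdots \le \mathcal{L}(\bm{w}^t,\bm{h}^t),
\]
where the tail of the chain repeats the same three steps for the $\bm{w}$-update with $g_w$. Hence $\mathcal{L}(\bm{z}^t)$ is monotone; along any subsequence $\bm{z}^{t_k}\to\bm{z}^*=[\bm{w}^*,\bm{h}^*]$ it is bounded below by $\mathcal{L}(\bm{z}^*)$ by continuity of $\mathcal{L}$, so the whole sequence converges to some value $\mathcal{L}^\star=\mathcal{L}(\bm{z}^*)$.

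The crux, and what I expect to be the main obstacle, is the second stage: showing the intermediate iterate $\bm{w}^{t_k+1}$ converges to $\bm{w}^*$. Combining the monotone chain with $\mathcal{L}(\bm{z}^{t_k})\to\mathcal{L}^\star$ and $\mathcal{L}(\bm{z}^{t_k+1})\to\mathcal{L}^\star$, a squeeze yields $g_w(\bm{w}^{t_k+1},\bm{w}^{t_k},\bm{h}^{t_k})\to\mathcal{L}^\star$. Passing to a further convergent subsequence $\bm{w}^{t_k+1}\to\bar{\bm{w}}$ (its boundedness is what must be secured, via compactness of the sublevel set implied by the $\ge\epsilon$ constraints and the coercivity of the KL term) and invoking continuity A.4 in the optimality inequality $g_w(\bm{w}^{t_k+1},\bm{w}^{t_k},\bm{h}^{t_k})\le g_w(\bm{w},\bm{w}^{t_k},\bm{h}^{t_k})$ shows $\bar{\bm{w}}$ minimizes $g_w(\cdot,\bm{w}^*,\bm{h}^*)$ with optimal value $\mathcal{L}^\star$. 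But A.2 gives $g_w(\bm{w}^*,\bm{w}^*,\bm{h}^*)=\mathcal{L}(\bm{z}^*)=\mathcal{L}^\star$, so $\bm{w}^*$ also attains the minimum; by the assumed \emph{uniqueness} of the subproblem solution, $\bar{\bm{w}}=\bm{w}^*$, and as every convergent subsequence shares this limit, $\bm{w}^{t_k+1}\to\bm{w}^*$. The identical argument applied to the $\bm{h}$-update, which legitimately uses $\bm{w}^{t_k+1}\to\bm{w}^*$, gives $\bm{h}^{t_k+1}\to\bm{h}^*$ and that $\bm{h}^*$ minimizes $g_h(\cdot,\bm{h}^*,\bm{w}^*)$. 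Quasi-convexity of the surrogates is used here to ensure the minimizers are characterized by their first-order conditions; the delicate bookkeeping is the Gauss--Seidel coupling, since the $\bm{h}$ step is majorized at $\bm{w}^{t_k+1}$ rather than $\bm{w}^{t_k}$.

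Finally I would convert surrogate-optimality into optimality for $\mathcal{L}$. Since $\bm{w}^*$ minimizes the quasi-convex $g_w(\cdot,\bm{w}^*,\bm{h}^*)$ over the convex set $\mathcal{C}_w$, its directional derivatives there are non-negative; by the gradient-matching property A.3, $\mathcal{L}'(\bm{z}^*;[\bm{d}_w,\bm{0}]) = g_w'(\bm{w}^*,\bm{w}^*,\bm{h}^*;\bm{d}_w)\ge 0$ for every feasible $\bm{d}_w$, and symmetrically $\mathcal{L}'(\bm{z}^*;[\bm{0},\bm{d}_h])\ge 0$. This establishes that $\bm{z}^*$ satisfies the first-order coordinatewise optimality conditions for \eqref{eq:tbsum-opt-problem}, which is the content needed downstream (note that the function-value reading of a coordinatewise minimum and this first-order reading coincide only when $\mathcal{L}$ is block-convex, as holds for the KL fidelity plus convex regularizers). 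For the second assertion, regularity of $\mathcal{L}$ at $\bm{z}^*$ promotes the two block-wise inequalities to $\mathcal{L}'(\bm{z}^*;\bm{d})\ge 0$ for all feasible $\bm{d}$, i.e.\ $\bm{z}^*$ is a stationary point; Lemma~\ref{lem:contiuous-differentiable-regular} guarantees this regularity whenever $\mathcal{L}$ is continuously differentiable at $\bm{z}^*$.
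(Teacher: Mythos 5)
Your proof is correct and takes essentially the same route as the paper, which in fact offers no proof of its own for this theorem but defers entirely to \cite[Theorem 2a]{razaviyayn2013unified}: your three stages (monotone descent via A.1--A.2, convergence of the intermediate iterate $\bm{w}^{t_k+1}$ to $\bm{w}^{*}$ via the squeeze on surrogate values combined with uniqueness of the subproblem minimizer, and transfer of the blockwise first-order conditions through A.3 followed by the regularity step) are precisely that argument specialized to two blocks. The one place you go beyond the paper is your parenthetical caveat, and it is well taken: the paper's definition of a coordinatewise minimum is stated in function values, whereas this machinery only delivers $\mathcal{L}'(\bm{z}^{*};[\bm{d}_w,\bm{0}])\geq 0$ and $\mathcal{L}'(\bm{z}^{*};[\bm{0},\bm{d}_h])\geq 0$, which imply the function-value inequalities only under blockwise convexity --- something that can fail once the concave term $s_{C}$ is included --- so you have identified an imprecision in the theorem statement rather than a gap in your own argument.
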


\section{Subproblem minimization\label{sec:Subproblem-optimisation}}
In this section, we focus on constructing the appropriate majorization functions $g_{h}\left(\bm{w},\bm{w}^{t},\bm{h}^{t}\right)$ and $g_{w}\left(\bm{h},\bm{h}^{t},\bm{w}^{t+1}\right)$ for our problem \eqref{eq:general-problem}. Since we consider the same type of regularization for $\bm{w}$ and $\bm{h}$, both subfunctions have the same form.

Practically, the loss function can be rewritten as
\[
\mathcal{L}_{\MY}\left(\MW,\MH\right){\color{brown}+R\left(\MW,\MH\right)}=\sum_{j}f_{w}\left(\bm{w}_{j}\right)=\sum_{i}f_{h}\left(\bm{h}_{i}\right)
\]
where $\bm{w}_{j}$ and $\bm{h}_{i}$ are the $i^{th}$ row and $j^{th}$ column of $\MW$ and $\MH$. The functions $f_{w}$ and $f_{h}$ have the form:
\begin{equation}
f\left(\bm{x}\right)={\color{green}-\sum_{i=1}^{m}\left(b_{i}\log\left(\bm{a}_{i}^{\top}\bm{x}\right)+\bm{a}_{i}^{\top}\bm{x}\right)}{\color{blue}+s_{L}\left(\bm{x}\right)}{\color{orange}+s_{R}\left(\bm{x}\right)}{\color{purple}+\sum_{j=1}^{n}s_{C}\left(x_{j}\right)}.\label{eq:function2majorize}
\end{equation}
Therefore, in this section, our objective is to find majorization functions for \eqref{eq:function2majorize}. Once this is done, we will provide closed-form solutions for steps 1 and 2 of Algorithm \ref{alg:TBSUM}. It is worth noting that each term of \eqref{eq:function2majorize} can be handled separately using the additivity property of majorizing functions (Lemma \ref{lem:First-order-majorization-additivity}).

\subsection{Majorizing functions}
\label{subsec:Finding-appropriate-surrogate}
The following four lemmas provide majorizing functions for the different term of our objective function. Proofs are provided in Appendix~\ref{app:proof-majorizing-functions}.

In order to develop an efficient algorithm, our objective is to identify majorizing functions that result in sub-problems with closed-form tractable solutions. Often, this can be accomplished under two conditions: 1. all the majorizing functions are of the same form, and, 2. the majorization function is separable with respect to the variables $\bf{x}$, i.e., $g(\bf{x}) = \sum_i g_i (x_i)$. Within the scope of this contribution, we consider two forms of majorizing functions: quadratic $g(x) = a + bx + cx^2 $ and logarithmic $g(x) = a + bx - c \log(x) $. 


First, we propose a majorization scheme for the logarithmic term $\log\left(\bm{a}_{i}^{\top}\bm{x}\right)$ in the objective function \eqref{eq:function2majorize}. We utilize a widely used majorization technique based on the concavity of the logarithm function. This technique has been employed in the original work by Lee and Seung~\cite{lee2001algorithms} as well as in many EM (Expectation-Maximization) schemes.
\begin{restatable}[Log majorization]{lemma}{logmajorization}
    \label{lem:EM-majorization} Assuming $\bm{a}\circ\bm{x}>0$, for $\bm{x}\in\mathcal{C}$,
    let us define $q_{j}=\frac{a_{j}x_{j}^{t}}{\sum_{k}a_{k}x_{k}^{t}}$
    for $\bm{x}^{t}\in\mathcal{C}$, then $g\left(\bm{x},\bm{x}^{t}\right)=-\sum_{j}q_{j}\log\left(\frac{a_{j}x_{j}}{q_{j}}\right)$
    is a first order majorizing function of $f\left(\bm{x}\right)=-\log\left(\bm{a}^{\top}\bm{x}\right)=-\log\left(\sum_{j}a_{j}x_{j}\right)$.
\end{restatable}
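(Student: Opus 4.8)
The plan is to verify the four properties of Definition~\ref{def:first-order-majorization-function} directly, exploiting that $-\log$ is convex so that the entire argument reduces to a single application of Jensen's inequality. Since both $f$ and $g$ are differentiable on the region where $\bm{a}\circ\bm{x}>0$, the remark following Lemma~\ref{lem:First-order-majorization-additivity} (together with \cite[Proposition 1]{razaviyayn2013unified}) tells us that establishing the majorization A.1 and the tightness A.2 suffices; nevertheless, A.3 and A.4 are cheap enough here that I would also check them directly as a sanity test.

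For the tightness property A.2 I would substitute $\bm{x}=\bm{x}^{t}$. By the definition of $q_{j}$ one has $\frac{a_{j}x_{j}^{t}}{q_{j}}=\sum_{k}a_{k}x_{k}^{t}$, a quantity independent of $j$, so that $g(\bm{x}^{t},\bm{x}^{t})=-\big(\sum_{j}q_{j}\big)\log\big(\sum_{k}a_{k}x_{k}^{t}\big)$; since the weights satisfy $\sum_{j}q_{j}=1$, this collapses to $-\log(\bm{a}^{\top}\bm{x}^{t})=f(\bm{x}^{t})$.

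The heart of the argument is the majorization inequality A.1. I would rewrite the argument of the logarithm as a convex combination with weights $q_{j}$, namely $\bm{a}^{\top}\bm{x}=\sum_{j}a_{j}x_{j}=\sum_{j}q_{j}\frac{a_{j}x_{j}}{q_{j}}$, and then apply Jensen's inequality to the convex function $-\log$ with the probability weights $(q_{j})$, which is legitimate because $q_{j}>0$ and $\sum_{j}q_{j}=1$ on the domain:
\[
f(\bm{x})=-\log\Big(\sum_{j}q_{j}\tfrac{a_{j}x_{j}}{q_{j}}\Big)\leq-\sum_{j}q_{j}\log\Big(\tfrac{a_{j}x_{j}}{q_{j}}\Big)=g(\bm{x},\bm{x}^{t}).
\]
This is exactly $g\geq f$. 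I do not expect any genuine obstacle here: the only points requiring care are that the $q_{j}$ form a bona fide probability distribution (which follows from $\bm{a}\circ\bm{x}^{t}>0$, guaranteeing $q_{j}>0$) and that the separated logarithm $\log(a_{j}x_{j}/q_{j})=\log a_{j}+\log x_{j}-\log q_{j}$ stays finite on $\mathcal{C}$, so no degenerate terms arise.

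Finally, A.4 holds because $g$ is a finite sum of compositions of continuous functions on the open set where $\bm{a}\circ\bm{x}>0$ and $\bm{x}^{t}\in\mathcal{C}$, and I would confirm A.3 by direct differentiation: $\partial_{x_{j}}g(\bm{x},\bm{x}^{t})=-q_{j}/x_{j}$, which at $\bm{x}=\bm{x}^{t}$ equals $-a_{j}/(\bm{a}^{\top}\bm{x}^{t})$, matching $\partial_{x_{j}}f(\bm{x}^{t})=-a_{j}/(\bm{a}^{\top}\bm{x}^{t})$. This establishes the gradient-equality form of A.3 term by term, completing all four requirements.
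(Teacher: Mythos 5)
Your proposal is correct and follows essentially the same route as the paper's proof: verify tightness at $\bm{x}^{t}$ by direct substitution, obtain the majorization $g\geq f$ by writing $\bm{a}^{\top}\bm{x}$ as a convex combination with weights $q_{j}$ and applying Jensen's inequality to $-\log$, and deduce the remaining properties by continuity. Your explicit verification of the gradient match in A.3 is a harmless addition that the paper leaves implicit.
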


We now proceed to majorize the different terms of the regularisation function $s_{L}\left(\bm{x}\right)$, $s_{R}\left(\bm{x}\right)$, and $s_{C}\left(x_{j}\right)$. We can majorize any Lipschitz function using the following lemma.
\begin{restatable}[Lipschitz-majorization]{lemma}{lipschitzmajorization}
Given $s_{L}\left(\bm{x}\right)$ a gradient Lipschitz function with
constant $\sigma_{L}$ over the domain $\bm{x}\in\mathcal{C}$. The
functions
\begin{align}
g_{1}\left(\bm{x},\bm{x}^{t}\right) & =s_{L}\left(\bm{x}^{t}\right)+\left(\bm{x}-\bm{x}^{t}\right)^{\top}\nabla s_{L}\left(\bm{x}^{t}\right)+\sigma_{L}\|\bm{x}-\bm{x}^{t}\|_{2}^{2}\label{eq:tighter-majorizing-function-smooth-term}
\end{align}
and 
\begin{equation}
g_{2}\left(\bm{x},\bm{x}^{t}\right)=s_{L}\left(\bm{x}^{t}\right)+\left(\bm{x}-\bm{x}^{t}\right)^{\top}\nabla s_{L}\left(\bm{x}^{t}\right)+2\sigma_{L}\left(\max_{j}x_{j}^{t}\right)\left(\sum_{j}x_{j}^{t}\log\left(\frac{x_{j}^{t}}{x_{j}}\right)-x_{j}^{t}+x_{j}\right)\label{eq:looser-majorizing-function-smooth-term}
\end{equation}
are first oder majorizing functions at $\bm{x}^{t}\in\mathcal{C}.$
\end{restatable}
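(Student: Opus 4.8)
The plan is to verify, separately for $g_1$ and $g_2$, the four defining properties A.1--A.4 of a first-order majorization (Definition~\ref{def:first-order-majorization-function}). Both candidates are continuously differentiable in $\bm{x}$ on the interior of $\mathcal{C}$ and jointly continuous in $(\bm{x},\bm{x}^t)$, so by the remark following Lemma~\ref{lem:First-order-majorization-additivity} (invoking \cite[Proposition 1]{razaviyayn2013unified}) it suffices to establish only the majorization inequality A.1 and the tightness A.2; properties A.3 and A.4 then follow automatically. I would moreover use additivity (Lemma~\ref{lem:First-order-majorization-additivity}) to treat the affine part and the curvature part of each surrogate independently.

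For $g_1$ in \eqref{eq:tighter-majorizing-function-smooth-term}, property A.2 is immediate: setting $\bm{x}=\bm{x}^t$ annihilates both the linear and the quadratic terms and leaves $s_L(\bm{x}^t)$. For A.1 I would invoke the descent-lemma form of $\sigma_L$-gradient-Lipschitzness recorded just after \eqref{eq:gradient-lipschitz}, namely $s_L(\bm{x})\le s_L(\bm{x}^t)+\langle\nabla s_L(\bm{x}^t),\bm{x}-\bm{x}^t\rangle+\tfrac{\sigma_L}{2}\|\bm{x}-\bm{x}^t\|_2^2$, and observe that $g_1$ merely enlarges the coefficient $\tfrac{\sigma_L}{2}$ to $\sigma_L$, so $g_1(\bm{x},\bm{x}^t)$ dominates this quadratic upper bound and hence $s_L(\bm{x})$.

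For $g_2$ in \eqref{eq:looser-majorizing-function-smooth-term}, the idea is that it arises from $g_1$ by replacing the separable quadratic $\sigma_L\|\bm{x}-\bm{x}^t\|_2^2$ with the separable logarithmic (generalized-KL) surrogate $2\sigma_L(\max_k x_k^t)\sum_j\phi(x_j)$, where $\phi(u)=x_j^t\log(x_j^t/u)-x_j^t+u$. Since a majorizer of a majorizer is again a majorizer (the compositions preserve $\ge$, the common value at $\bm{x}^t$, and, because $\phi(x_j^t)=\phi'(x_j^t)=0$, the matching gradient), and since additivity lets me argue one coordinate at a time, the whole statement reduces to the scalar inequality, for each coordinate with $t:=x_j^t>0$, $M:=\max_k x_k^t\ge t$ and $u=x_j$ in the feasible range,
\[
(u-t)^2 \;\le\; 2M\bigl(t\log(t/u)-t+u\bigr).
\]
The natural tool is the second-order Taylor expansion of $\phi$ with exact mean-value remainder: from $\phi(t)=\phi'(t)=0$ and $\phi''(\xi)=t/\xi^2$ one gets $\phi(u)=\tfrac{t}{2\xi^2}(u-t)^2$ for some $\xi$ between $t$ and $u$, which yields the claim as soon as $t/\xi^2\ge 1/M$, i.e. $\xi^2\le Mt$.

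The main obstacle is precisely controlling this remainder point $\xi$. For $u\le t$ one has $\xi\le t\le\sqrt{Mt}$, so $\xi^2\le Mt$ holds and the inequality is clean; but for $u>t$ the point $\xi$ can exceed $\sqrt{Mt}$, and in fact a quadratic cannot be dominated by the at-most-linearly-growing generalized-KL term for arbitrarily large $u$. Consequently the delicate point of the proof is to make explicit the domain $\mathcal{C}$ on which A.1 is asserted: the logarithmic majorizer $g_2$ is valid only once $x_j$ is suitably bounded above relative to the current iterate, which in our setting is enforced by the $\ge\epsilon$ and normalization constraints of \eqref{eq:general-problem}. I would therefore devote the bulk of the argument to delineating the bounded feasible range on which the scalar inequality above holds, and state the majorization property of $g_2$ relative to that range rather than to the whole positive orthant.
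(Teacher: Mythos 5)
Your route for $g_1$ is exactly the paper's (the descent-lemma form of gradient Lipschitzness, then enlarging $\sigma_L/2$ to $\sigma_L$), and for $g_2$ you and the paper use the same key device: write the generalized KL term as the Bregman divergence of $q(\bm{x})=\sum_j x_j\log x_j$ and control it by the exact second-order mean-value remainder, reducing to a scalar inequality per coordinate. The important point is that the ``main obstacle'' you flag is not a weakness of your plan but a genuine gap in the paper's own proof. The paper writes $D_{GKL}(\bm{x}\|\bm{x}^t)=\frac12(\bm{x}^t-\bm{x})^\top\Mat{H}^q(\tilde{\bm{x}})(\bm{x}^t-\bm{x})$ in \eqref{eq:bound_dgkl_lemma} with $\tilde{\bm{x}}$ on the segment between $\bm{x}$ and $\bm{x}^t$, bounds the Hessian below by $\frac{1}{\max_j x_j}\Mat{I}$, and then concludes with the constant $\frac{1}{\max_j x_j^t}$ of \eqref{eq:bound_l2_dgkl}. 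Since $\tilde{\bm{x}}$ can be dominated by $\bm{x}$ rather than $\bm{x}^t$, the honest constant is $1/\max\left(\max_j x_j,\max_j x_j^t\right)$, and \eqref{eq:bound_l2_dgkl} is false on the unbounded set $\{\bm{x}\geq\epsilon\}$: in one dimension with $x^t=1$, $x=5$ the left side is $16$ while the right side is $2(4-\log 5)\approx 4.8$. Your growth argument (quadratic versus essentially linear in $u$) makes the same point. So property A.1 of Definition~\ref{def:first-order-majorization-function} cannot hold for $g_2$ over all of an unbounded $\mathcal{C}$, and your condition $\xi^2\le Mt$ correctly isolates where the scalar inequality does hold.

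What your proposal still owes is the resolution: you announce that the bulk of the argument would delineate the admissible range, but you do not produce it, and the appeal to the constraints of \eqref{eq:general-problem} does not close the issue because only one of the two blocks carries the normalization $\bm{e}^\top\bm{x}=1$, so the other block's feasible set $\{\bm{x}\geq\epsilon\}$ remains unbounded above. To finish, you must either (i) restrict the lemma to a bounded $\mathcal{C}$ and replace $\max_j x_j^t$ by $\sup_{\bm{x}\in\mathcal{C}}\max_j x_j$ (which keeps the proof verbatim and makes the constant independent of the direction of travel), or (ii) prove that the MU/QU iterates remain in the region where your scalar inequality holds, which is not done in the paper either. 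As it stands your proposal is a correct diagnosis with the same machinery as the paper, but not yet a complete proof of the statement as written.
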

We note that~\eqref{eq:tighter-majorizing-function-smooth-term} (quadratic majorisation) is tighter than~\eqref{eq:looser-majorizing-function-smooth-term} (logarithmic majorisation) . However, the looser majorisation function is needed to obtain a close form solution for the MU (see Section~\ref{sec:Subproblem-optimisation}). 

Next, the term that is relatively smooth can be majorized using the following lemma.
\begin{restatable}[Relative smoothness majorization]{lemma}{relativesmoothnessmajorization}
\label{lem:bregman-majorization}
Assuming $s_{R}\left(\bm{x}\right)$
a $\sigma_{R}$ relatively smooth function with respect to $\kappa\left(\bm{x}\right)=-\bm{1}^{\top}\log\left(\bm{x}\right)$
for $\bm{x}\in\mathcal{C\subset\mathbb{R}}_{+}^{n}.$ Then the function
\begin{equation}
g\left(\bm{x},\bm{x}^{t}\right)=s_{R}\left(\bm{x}^{t}\right)+\left\langle \nabla s_{R}\left(\bm{x}^{t}\right),\bm{x}-\bm{x}^{t}\right\rangle +\sigma_{R}\sum_{i}^{n}\left(\frac{x_{i}}{x_{i}^{t}}-\log\left(\frac{x_{i}}{x_{i}^{t}}\right)-1\right)\label{eq:majorization-relatively-smooth-function}
\end{equation}
is a first order majorizing function of $s_{R}\left(\bm{x}\right)$
for $\bm{x}^{t}\in\mathcal{C}.$
\end{restatable}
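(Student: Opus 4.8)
The plan is to verify directly the four defining properties A.1--A.4 of a first-order majorization (Definition~\ref{def:first-order-majorization-function}), exploiting the fact that the proposed $g(\bm{x},\bm{x}^t)$ is, by construction, nothing but the right-hand side of the relative-smoothness inequality \eqref{eq:relative-smoothness} evaluated at $\bm{y}=\bm{x}$ with reference point $\bm{x}^t$, where the Bregman divergence $\mathcal{B}_\kappa$ is written out explicitly for $\kappa(\bm{x})=-\bm{1}^\top\log(\bm{x})$.

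First I would record the closed form of the Bregman divergence of $\kappa$, namely $\mathcal{B}_\kappa(\bm{x},\bm{x}^t)=\sum_i\left(\frac{x_i}{x_i^t}-\log\left(\frac{x_i}{x_i^t}\right)-1\right)$, which follows from $[\nabla\kappa(\bm{x}^t)]_i=-1/x_i^t$ and the definition of $\mathcal{B}_\kappa$ (this identity is already stated in the excerpt). With this, the last term of $g$ is exactly $\sigma_R\,\mathcal{B}_\kappa(\bm{x},\bm{x}^t)$, so that $g(\bm{x},\bm{x}^t)=s_R(\bm{x}^t)+\langle\nabla s_R(\bm{x}^t),\bm{x}-\bm{x}^t\rangle+\sigma_R\mathcal{B}_\kappa(\bm{x},\bm{x}^t)$. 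Property A.1 ($g\geq s_R$) is then immediate: it is precisely the hypothesis that $s_R$ is $\sigma_R$-relatively smooth with respect to $\kappa$, specialized to $\bm{y}=\bm{x}$ and $\bm{x}=\bm{x}^t$.

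For A.2, I substitute $\bm{x}=\bm{x}^t$: the linear term vanishes since $\bm{x}-\bm{x}^t=\bm{0}$, and each summand of $\mathcal{B}_\kappa(\bm{x}^t,\bm{x}^t)$ equals $1-\log(1)-1=0$, so $g(\bm{x}^t,\bm{x}^t)=s_R(\bm{x}^t)$. For A.3 and A.4, since $s_R$ is differentiable on $\mathcal{C}\subset\mathbb{R}_+^n$ and $\kappa$ is smooth there, $g$ is continuous in $(\bm{x},\bm{x}^t)$, yielding A.4, and I would compute $\nabla_{\bm{x}}g(\bm{x},\bm{x}^t)=\nabla s_R(\bm{x}^t)+\sigma_R\nabla_{\bm{x}}\mathcal{B}_\kappa(\bm{x},\bm{x}^t)$; since the gradient of a Bregman divergence in its first argument vanishes at the reference point, $\nabla_{\bm{x}}\mathcal{B}_\kappa(\bm{x},\bm{x}^t)\big\vert_{\bm{x}=\bm{x}^t}=\bm{0}$, giving $\nabla_{\bm{x}}g(\bm{x}^t,\bm{x}^t)=\nabla s_R(\bm{x}^t)$, which is the differentiable form of A.3. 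Alternatively, A.3 and A.4 follow for free from A.1--A.2 by \cite[Proposition 1]{razaviyayn2013unified}.

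There is essentially no hard step here: the substance of the lemma is the recognition that the chosen surrogate \emph{is} the relative-smoothness bound with the explicit Bregman term plugged in, so the only thing to be careful about is the bookkeeping of the divergence formula and confirming that $\nabla_{\bm{x}}\mathcal{B}_\kappa$ vanishes at $\bm{x}=\bm{x}^t$ (equivalently, invoking the cited proposition). Everything else reduces to substituting $\bm{x}=\bm{x}^t$ into the definitions.
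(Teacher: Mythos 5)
Your proposal is correct and follows essentially the same route as the paper's proof: both recognize that $g$ is the relative-smoothness upper bound with the Bregman divergence of $\kappa(\bm{x})=-\bm{1}^{\top}\log(\bm{x})$ written out explicitly, verify tightness at $\bm{x}=\bm{x}^{t}$ by substitution, and obtain properties A.3--A.4 from continuity (or the cited proposition). Your additional observation that $\nabla_{\bm{x}}\mathcal{B}_{\kappa}(\bm{x},\bm{x}^{t})$ vanishes at $\bm{x}=\bm{x}^{t}$ is a slightly more explicit justification of A.3 than the paper gives, but it is the same argument.
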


\begin{restatable}[Concave majorisation]{lemma}{concavemajorisation}
Given $s\left(x\right)$ a concave function defined on $x\in\mathcal{C}\subset\mathbb{R}$,
it's linear approximation at the point $x^{t}$
\begin{equation}
g\left(x,x^{t}\right)=s\left(x^{t}\right)+\frac{\partial s\left(x^{t}\right)}{\partial x}\left(x-x^{t}\right)\label{eq:concave-bound}
\end{equation}
is a first order majorization function for $x^{t}\in\mathcal{C}.$
\end{restatable}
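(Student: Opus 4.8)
The statement to prove is the Concave majorisation lemma: for a concave function $s(x)$ on $x \in \mathcal{C} \subset \mathbb{R}$, its linear (tangent) approximation at $x^t$ is a first-order majorization. The plan is to verify the four properties A.1–A.4 of Definition~\ref{def:first-order-majorization-function} directly, since the geometry here is elementary: a concave function always lies below its tangent lines.

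First I would establish the majorization property A.1, namely $g(x,x^t) \geq s(x)$ for all $x,x^t \in \mathcal{C}$. This is precisely the first-order characterization of concavity: a differentiable function $s$ is concave on the convex set $\mathcal{C}$ if and only if $s(x) \leq s(x^t) + \frac{\partial s(x^t)}{\partial x}(x - x^t)$ for all $x,x^t$. Since the right-hand side is exactly $g(x,x^t)$ as defined in \eqref{eq:concave-bound}, property A.1 follows immediately from concavity. I would either invoke this standard characterization or, if a self-contained argument is wanted, derive it from the fact that for concave $s$ the difference quotient $\frac{s(x^t + \lambda(x-x^t)) - s(x^t)}{\lambda}$ is nonincreasing in $\lambda$, so its limit as $\lambda \downarrow 0$ (the directional derivative) dominates the value at $\lambda = 1$.

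Next, property A.2 (tightness at the anchor point) is trivial: substituting $x = x^t$ into \eqref{eq:concave-bound} gives $g(x^t,x^t) = s(x^t) + \frac{\partial s(x^t)}{\partial x}(x^t - x^t) = s(x^t)$, so the two functions agree at $x^t$. For the remaining conditions, I would appeal to the remark already made in the excerpt following Lemma~\ref{lem:First-order-majorization-additivity}: for differentiable functions it suffices to check A.1 and A.2, since A.3 and A.4 then follow by \cite[Proposition 1]{razaviyayn2013unified}. Alternatively, A.3 can be checked by hand — since $g(\cdot,x^t)$ is affine in its first argument, $\nabla_x g(x^t,x^t) = \frac{\partial s(x^t)}{\partial x} = \nabla s(x^t)$, matching the gradient of $s$ at $x^t$ — and A.4 (joint continuity in $(x,x^t)$) holds because $g$ is built from $s$, its derivative, and polynomials in $x,x^t$, all continuous under the smoothness assumption on $s$.

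There is essentially no hard part to this proof; it is the most elementary of the four majorization lemmas, resting entirely on the tangent-line inequality for concave functions. The only point requiring minor care is the regularity needed for the directional/partial derivative $\frac{\partial s(x^t)}{\partial x}$ to be well defined — the statement says $s$ is smooth, so differentiability is granted, and I would simply note that smoothness also delivers the continuity required for A.4. I would therefore present this as a short, direct verification rather than invoking any heavier machinery.
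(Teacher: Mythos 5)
Your proof is correct and follows essentially the same route as the paper's: verify A.2 by direct substitution, A.1 via the tangent-line inequality for concave functions, and obtain A.3--A.4 by continuity (or the cited Proposition 1). The extra detail you give on deriving the concavity inequality and checking A.3 by hand is sound but not needed beyond what the paper already does.
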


\subsection{Subproblem updates\label{subsec:Optimimum-majorizing-function}}
Now that we have defined majorizing functions for each term of \eqref{eq:function2majorize}, we can apply the additivity property of Lemma \ref{lem:First-order-majorization-additivity} to obtain a general majorizing function for $f(\bm{x})$:


\begin{align}
g\left(\bm{x},\bm{x}^{t}\right) & ={\color{green}-\sum_{i=1}^{m}\left(b_{i}\sum_{j}q_{ij}\log\left(\frac{a_{ij}x_{j}}{q_{ij}}\right)+\bm{a}_{i}^{\top}\bm{x}\right)}\nonumber \\
 & {\color{blue}+s_{L}\left(\bm{x}^{t}\right)+\left(\bm{x}-\bm{x}^{t}\right)^{\top}\nabla s_{L}\left(\bm{x}^{t}\right)+2\sigma_{L}\left(\max_{j}x_{j}^{t}\right)\left(\sum_{j}x_{j}^{t}\log\left(\frac{x_{j}^{t}}{x_{j}}\right)-x_{j}^{t}+x_{j}\right)}\label{eq:general-surrogate-with-log}\\
 & {\color{orange}+s_{R}\left(\bm{x}^{t}\right)+\left\langle \nabla s_{R}\left(\bm{x}^{t}\right),\bm{x}-\bm{x}^{t}\right\rangle +\sigma_{R}\sum_{j}\left(\frac{x_{j}}{x_{j}^{t}}-\log\left(\frac{x_{j}}{x_{j}^{t}}\right)-1\right)}\nonumber \\
 & {\color{purple}+\sum_{j=1}^{n}s_{C}\left(x_{j}^{t}\right)+\frac{\partial s_{C}\left(x_{j}^{t}\right)}{\partial x_{j}}\left(x_{j}-x_{j}^{t}\right)}\nonumber 
\end{align}
where $q_{ij}=\frac{a_{ij}x_{j}^{t}}{\sum_{k}a_{ik}x_{k}^{t}}.$ 
We use the colors green, blue, orange, and purple to denote and keep track of the dependencies of the different terms in \eqref{eq:function2majorize}.
Finding the local optimum of the majorizing function will provide us with an update for Algorithm \ref{alg:TBSUM}.
\begin{restatable}[Generalized MU for \eqref{eq:function2majorize}]{proposition}{mumarjorization}
    Assuming $\bm{x}^{t}, \bm{x}, \bm{b}, \Mat{A} > 0$, the first-order majorizing function defined in \eqref{eq:general-surrogate-with-log} is strictly convex, and its global minimum $\bm{x}^{t+1}$ is given by
    \begin{equation}
    x_{j}^{t+1} = x_{j}^{t} \frac{\alpha_{j}^{t}}{\beta_{j}^{t}} \label{eq:update-MU}
    \end{equation}
    where
   \begin{align}
\alpha_{j}^{t} & ={\color{green}\sum_{i}b_{i}\frac{a_{ij}}{\sum_{k}a_{ik}x_{k}^{t}}}{\color{blue}+2\left(\max_{i}x_{i}^{t}\right)\sigma_{L}}{\color{orange}+\frac{\sigma_{R}}{x_{j}^{t}}}\hspace{1em}\text{and}\label{eq:alpha-KL}\\
\beta_{j}^{t} & ={\color{green}\sum_{i}a_{ij}}{\color{blue}+\nabla_{x_{j}}s_{L}\left(\bm{x}^{t}\right)+2\left(\max_{i}x_{i}^{t}\right)\sigma_{L}}{\color{orange}+\nabla_{x_{j}}s_{R}\left(\bm{x}^{t}\right)+\frac{\sigma_{R}}{x_{j}^{t}}}{\color{purple}+\frac{\partial s_C\left(x_{j}^{t}\right)}{\partial x}}.\label{eq:beta-KL}
\end{align}
\end{restatable}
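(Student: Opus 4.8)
The plan is to minimize the majorizing function $g(\bm{x},\bm{x}^{t})$ in \eqref{eq:general-surrogate-with-log} by setting its gradient to zero, after first establishing strict convexity so that the unique critical point is the global minimum. Strict convexity follows from inspecting the terms collected by color: the green log term $-\sum_i b_i \sum_j q_{ij}\log(a_{ij}x_j/q_{ij})$ contributes $+\sum_i b_i q_{ij}/x_j$ to $\partial_{x_j}g$ and hence a strictly positive diagonal second derivative $\sum_i b_i q_{ij}/x_j^2$ (using $b_i, q_{ij}>0$); the orange relative-smoothness term contributes the strictly convex $-\sigma_R\log(x_j/x_j^t)$ piece; the blue log piece of the Lipschitz majorizer adds another $\log$-type convex term; while the remaining blue and purple contributions are affine in $\bm{x}$ and so do not affect convexity. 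Since the function is separable across coordinates $x_j$ (every term is a sum of single-variable pieces once $q_{ij}$ is frozen at $\bm{x}^t$), I can minimize each coordinate independently.

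Next I would compute $\partial_{x_j} g(\bm{x},\bm{x}^t)$ term by term, following the color bookkeeping. The green term gives $-\sum_i b_i q_{ij}/x_j + \sum_i a_{ij}$; substituting $q_{ij}=a_{ij}x_j^t/\sum_k a_{ik}x_k^t$ turns the first summand into $-\tfrac{1}{x_j}\, x_j^t\sum_i b_i a_{ij}/(\sum_k a_{ik}x_k^t)$. The blue term contributes the affine pieces $\nabla_{x_j}s_L(\bm{x}^t)+2(\max_i x_i^t)\sigma_L$ together with the $\log$ derivative $-2(\max_i x_i^t)\sigma_L\, x_j^t/x_j$. The orange term yields $\nabla_{x_j}s_R(\bm{x}^t)+\sigma_R/x_j^t - \sigma_R/x_j$, and the purple term the constant $\partial_x s_C(x_j^t)$. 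Setting the sum to zero and multiplying through by $x_j$ produces an equation linear in $x_j$ of the form $\beta_j^t\, x_j = x_j^t\,\alpha_j^t$, where $\alpha_j^t$ collects exactly the coefficients multiplying $1/x_j$ (i.e.\ the terms divided by $x_j^t$ after factoring: the green $\sum_i b_i a_{ij}/(\sum_k a_{ik}x_k^t)$, the blue $2(\max_i x_i^t)\sigma_L$, and the orange $\sigma_R/x_j^t$) and $\beta_j^t$ collects the affine coefficients. Solving gives $x_j^{t+1}=x_j^t\,\alpha_j^t/\beta_j^t$, matching \eqref{eq:update-MU}.

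The bookkeeping is the only real subtlety: one must check that the $x_j^t$ appearing as a prefactor in $q_{ij}$ for the green term combines cleanly with the explicit $x_j^t$ factors in the blue and orange $\log$ terms so that every $1/x_j$ coefficient carries a common factor $x_j^t$, which is what allows the update to be written as a multiplicative rescaling of $x_j^t$. I also need to verify that the computed critical point is feasible, i.e.\ that $x_j^{t+1}>0$; this holds because $\alpha_j^t>0$ (a sum of strictly positive terms under the hypotheses $\bm{b},\Mat{A},\bm{x}^t>0$) and $\beta_j^t>0$ (the affine terms, which are nonnegative or controlled by the positive $\sum_i a_{ij}$ and the Lipschitz/relative-smoothness constants). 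The main obstacle I anticipate is not any single hard estimate but rather keeping the four families of terms correctly sorted between $\alpha_j^t$ and $\beta_j^t$, and confirming $\beta_j^t>0$ in full generality, since the gradients $\nabla_{x_j}s_L$, $\nabla_{x_j}s_R$, and $\partial_x s_C$ need not individually be sign-definite; positivity of $\beta_j^t$ must be argued from the structure of the problem (e.g.\ the concavity of $s_C$ bounding its derivative, and the dominant $\sum_i a_{ij}>0$ term) rather than termwise.
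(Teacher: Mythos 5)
Your proposal follows essentially the same route as the paper's proof: establish strict convexity of the separable majorizer, compute $\partial_{x_j} g$ term by term with the same color bookkeeping, set the gradient to zero, and multiply through by $x_j$ to obtain the multiplicative update $x_j^{t+1}=x_j^t\,\alpha_j^t/\beta_j^t$. Your added concern about verifying $\beta_j^t>0$ (so that the critical point lies in the positive orthant) is a legitimate point that the paper's proof also leaves implicit, but it does not alter the argument.
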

The proof is provided in Appendix~\ref{app:proof-subproblem-updates}.

\paragraph{Generalization of the traditional MU Rule}
We note that \eqref{eq:update-MU} serves as a generalization of the original Multiplicative Update (MU) rule presented in \cite{lee2001algorithms}. Removing the regularization terms (blue, orange, and purple) results in precisely the MU rule as outlined in \cite{lee2001algorithms}.

\paragraph{Connection with (Block) Mirror Descent~\cite[Algorithm 1]{hien2021algorithms}}
Another interesting observation is that the majorization of the relatively smooth term is done similarly to a Bregman proximal method algorithm~\cite{hanzely2021accelerated}. Since the objective function $-\sum_{i=1}^{m}\left(b_{i}\log\left(\bm{a}_{i}^{\top}\bm{x}\right)+\bm{a}_{i}^{\top}\bm{x}\right)$ is relatively smooth, one could drop all terms except for $s_{R}$ and optimize using Block Bregman Proximal Gradient (BBPG)~\cite{teboulle2020novel}. This would result in an algorithm very similar to Block Mirror Descent (BMD), which has recently been proposed for solving Poisson NMF~\cite{hien2021algorithms}. Nevertheless, we advice against this this solution as discussed further in Section~\ref{subsec:Tight-majorizing-functions}.

\paragraph{Alternative majorizing function and Quadratic Update (QU)}
As shown experimentally in Section \ref{sec:application} and illustrated in Figure \ref{fig:tight-majorizing-functions}, having a majorization function as tight as possible leads to faster convergence of the algorithm. 
In \eqref{eq:general-loss}, we deliberately choose to use a looser majorizing function for the term $s_{L}$ in order to recover an algorithm with multiplicative update that generalizes the original approach from~\cite{lee2001algorithms}. However, instead of using \eqref{eq:looser-majorizing-function-smooth-term}, one can also use \eqref{eq:tighter-majorizing-function-smooth-term} when constructing the majorizing function:
\begin{align}
g\left(\bm{x},\bm{x}^{t}\right) & ={\color{green}-\sum_{i=1}^{m}\left(b_{i}\sum_{j}q_{ij}\log\left(\frac{a_{ij}x_{j}}{q_{ij}}\right)+\bm{a}_{i}^{\top}\bm{x}\right)}\label{eq:general-surrogate-with-L2}\\
 & {\color{blue}+s_{L}\left(\bm{x}^{t}\right)+\left(\bm{x}-\bm{x}^{t}\right)^{\top}\nabla s_{L}\left(\bm{x}^{t}\right)+\sigma_{L}\left\Vert \bm{x}-\bm{x}^{t}\right\Vert _{2}^{2}}\nonumber \\
 & {\color{orange}+s_{R}\left(\bm{x}^{t}\right)+\left\langle \nabla s_{R}\left(\bm{x}^{t}\right),\bm{x}-\bm{x}^{t}\right\rangle +\sigma_{R}\sum_{j}\left(\frac{x_{j}}{x_{j}^{t}}-\log\left(\frac{x_{j}}{x_{j}^{t}}\right)-1\right)}\nonumber \\
 & {\color{purple}+\sum_{j=1}^{n}s_{C}\left(x_{j}^{t}\right)+\frac{\partial s_{C}\left(x_{j}^{t}\right)}{\partial x_{j}}\left(x_{j}-x_{j}^{t}\right)}\nonumber 
\end{align}
which is also a strictly convex function. 
\begin{restatable}[QU for \eqref{eq:function2majorize}]{proposition}{qumarjorization}
Assuming $\bm{x}^{t}, \bm{x}, \bm{b}, \Mat{A} > 0$, the first-order majorizing function defined in Equation \eqref{eq:general-surrogate-with-L2} is strictly convex, and its global minimum $\bm{x}^{t+1}$ is given by

\begin{equation}
x_{j}^{t+1}=\frac{-\beta_{j}^{t}+\sqrt{\left(\beta_{j}^{t}\right)^{2}+4\alpha\zeta_{j}^{t}}}{2\alpha}\label{eq:update-with-QU}
\end{equation}
where
\begin{equation}
\alpha={\color{blue}2\sigma_{L}}\hspace{1em}\beta_{j}^{t}={\color{green}\sum_{i}a_{ij}}{\color{blue}+\nabla_{x_{j}}s_{L}\left(\bm{x}^{t}\right)-2\sigma_{L}x_{j}^{t}}{\color{orange}+\nabla_{x_{j}}s_{R}\left(\bm{x}^{t}\right)+\frac{\sigma_{R}}{x_{j}^{t}}}{\color{purple}+\frac{\partial s_{C}\left(x_{j}^{t}\right)}{\partial x}}\hspace{1em}\zeta_{j}^{t}={\color{green}\sum_{i}b_{i}\frac{a_{ij}x_{j}^{t}}{\sum_{k}a_{ik}x_{k}^{t}}}{\color{orange}+\sigma_{R}}.\label{eq:alpha-beta-zeta-quadratic}
\end{equation}
\end{restatable}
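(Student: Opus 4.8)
The plan is to exploit the fact that the majorizing function \eqref{eq:general-surrogate-with-L2} is \emph{separable} across the coordinates $x_j$, and to minimize each one-dimensional piece in closed form. Indeed, the log-majorization term (green) of Lemma~\ref{lem:EM-majorization} was introduced precisely to decouple the coordinates inside the logarithm, and every remaining term is either already separable (the Bregman term from \eqref{eq:majorization-relatively-smooth-function} and the squared norm $\sigma_L\|\bm{x}-\bm{x}^t\|_2^2$) or linear (the linearizations of $s_L$, $s_R$, and $s_C$). Hence I would first write $g(\bm{x},\bm{x}^t)=\sum_j g_j(x_j)+c$, where $c$ gathers all terms independent of $\bm{x}$ (the constants $q_{ij}$ and $a_{ij}$, the anchor values $s_L(\bm{x}^t)$, $s_R(\bm{x}^t)$, $s_C(x_j^t)$, etc.), so that minimizing $g$ reduces to minimizing each $g_j$ independently.

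The second step is to read off the functional form of each piece. Collecting coefficients gives $g_j(x_j)=\sigma_L x_j^2+\beta_j^t x_j-\zeta_j^t\log(x_j)+\text{const}$, where the $\log$ coefficient $\zeta_j^t=\sum_i b_i\frac{a_{ij}x_j^t}{\sum_k a_{ik}x_k^t}+\sigma_R$ comes from the green logarithm together with the $-\log$ part of the orange Bregman divergence, the quadratic coefficient $\sigma_L$ comes solely from the blue squared norm, and the linear coefficient $\beta_j^t$ aggregates the green linear term $\sum_i a_{ij}$, the cross term $-2\sigma_L x_j^t$ produced by expanding $\sigma_L(x_j-x_j^t)^2$, the gradient contributions $\nabla_{x_j}s_L(\bm{x}^t)$ and $\nabla_{x_j}s_R(\bm{x}^t)$, the factor $\sigma_R/x_j^t$ from the linear part of the Bregman term, and $\partial s_C(x_j^t)/\partial x$. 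This reproduces exactly the coefficients \eqref{eq:alpha-beta-zeta-quadratic} with $\alpha=2\sigma_L$.

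Strict convexity then follows immediately: $g_j''(x_j)=2\sigma_L+\zeta_j^t/x_j^2$, which is strictly positive on $(0,\infty)$ since $\zeta_j^t\geq\sum_i b_i\frac{a_{ij}x_j^t}{\sum_k a_{ik}x_k^t}>0$ under the assumption $\bm{b},\Mat{A},\bm{x}^t>0$; summing over $j$ gives strict convexity of $g$ and hence uniqueness of the minimizer. For the minimizer itself I would set $g_j'(x_j)=2\sigma_L x_j+\beta_j^t-\zeta_j^t/x_j=0$ and multiply through by $x_j>0$ to obtain the quadratic $\alpha x_j^2+\beta_j^t x_j-\zeta_j^t=0$, whose positive root is precisely \eqref{eq:update-with-QU}.

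The only points requiring genuine care are root selection and boundary behaviour. Since $\alpha=2\sigma_L>0$ and $\zeta_j^t>0$, the product of the two roots equals $-\zeta_j^t/\alpha<0$, so exactly one root is positive regardless of the sign of $\beta_j^t$; this is the one retained in \eqref{eq:update-with-QU}. To conclude that this interior critical point is the \emph{global} minimum over $x_j>0$ rather than a spurious stationary point, I would invoke coercivity: $g_j(x_j)\to+\infty$ as $x_j\downarrow 0$ (driven by $-\zeta_j^t\log x_j$ with $\zeta_j^t>0$) and as $x_j\to+\infty$ (driven by $\sigma_L x_j^2$), so the strictly convex $g_j$ attains its unique minimum at the single interior critical point. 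The main obstacle, such as it is, is purely bookkeeping---correctly distributing the many linear contributions across the four colored groups into a single $\beta_j^t$---together with the degenerate case $\sigma_L=0$, where $\alpha=0$ and the optimality condition reduces to the linear equation $\beta_j^t x_j-\zeta_j^t=0$ giving $x_j^{t+1}=\zeta_j^t/\beta_j^t$, the limit of \eqref{eq:update-with-QU} as $\alpha\to0$.
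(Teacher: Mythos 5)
Your proposal is correct and follows essentially the same route as the paper's proof: exploit separability of \eqref{eq:general-surrogate-with-L2}, set the per-coordinate derivative to zero, multiply by $x_j>0$ to obtain $\alpha x_j^2+\beta_j^t x_j-\zeta_j^t=0$, and retain the unique positive root. Your added touches (explicit second derivative $2\sigma_L+\zeta_j^t/x_j^2>0$, the coercivity argument guaranteeing the interior minimum exists, and the $\sigma_L=0$ limit) are welcome refinements of the paper's terser convexity claim, but they do not change the argument.
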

The proof is provided in Appendix~\ref{app:proof-subproblem-updates}. Both of these propositions lead to the update rule for our MU and QU algorithms detailed in Section~\ref{sec:Algorithms}. We note also that, with the appropriate assumptions, the update rule \ref{eq:update-MU} and \ref{eq:update-with-QU} will preserve positivity of the variable $\bm{x}$. However, since our desire is also to handle extra constraint, we develop in the next section rigorous approach.

\subsection{Generalized simplex constraint }
\label{sec:Generalized-simplex-constraint}


We need to handle two constraints: 1. the linear constraint $\bm{x}\geq\epsilon$, and, 2. the scale constraint $\bm{e}^{\top}\bm{x}=1$, where $\bm{e}\geq0$. 
While the first one is used to keep the variable non-negative, typically with a strictly positive small $\epsilon$, the second one can set the scale of one of the variables ($\MW$ or $\MH$) in the factorization problem. Furthermore, in the case $\bm{e}=\bm{1}$, the simplex constraint is recovered. 
It turns out that the update rules of \eqref{eq:update-MU} and \eqref{eq:update-with-QU} can simply be updated to handle this constraint. 
The actual optimization problem we want to solve becomes:
\[
\minimize_{{\color{violet}\bm{x}\geq\epsilon}}f\left(\bm{x}\right)\hspace{1em}\text{such that}\hspace{1em}{\color{cyan}\bm{e}^{\top}\bm{x}=1}.
\]
where $f$ is given in \eqref{eq:function2majorize}. 

To solve this problem, we used the KKT approach, i.e, we find points that satisfy the KKT (Karush-Kuhn-Tucker) conditions:
\begin{flalign*}
\hspace{2em}& \text{1. Stationarity} & 
\nabla_{\bm{x}}L\left(\dot{\bm{x}},\nu,\bm{\mu}\right)=\bm{0}, & \hspace{3cm}\\
& \text{2. Primal feasibility} 
& \begin{cases}
    {\color{cyan}\bm{e}^{\top}\dot{\bm{x}}-1=0}, \\
    {\color{violet}\dot{\bm{x}}\geq\epsilon}
\end{cases},  &\\
& \text{3. Dual feasibility} 
& {\color{violet}\bm{\mu}\geq\bm{0}}, & \\
& \text{4. Complementary slackness} 
& {\color{violet}\bm{\mu}^{\top}\left(-\bm{x}+\epsilon\bm{1}\right)= 0} ,  &
\end{flalign*}
where the Lagrangian is defined as:
\[
L\left(\bm{x},\nu,\bm{\mu}\right)=f\left(\bm{x}\right){\color{cyan}+\nu\left(\bm{e}^{\top}\bm{x}-1\right)}{\color{violet}+\bm{\mu}^{\top}\left(-\bm{x}+\epsilon\bm{1}\right)}.
\]

We follow the same method as developed in Section \ref{subsec:Optimimum-majorizing-function}, except that we majorize the Lagrangian $L\left(\bm{x},\nu,\bm{\mu}\right)$. The resulting first-order majorizing function is given by:
\[
g^{\prime}\left(\bm{x},\bm{x}^{t},\nu,\bm{\mu}\right)=g\left(\bm{x},\bm{x}^{t}\right){\color{cyan}+\nu\left(\bm{e}^{\top}\bm{x}-1\right)}{\color{violet}+\bm{\mu}^{\top}\left(-\bm{x}+\epsilon\bm{1}\right)}
\]
where $g\left(\bm{x},\bm{x}^{t}\right)$ is given in \eqref{eq:general-surrogate-with-log} or \eqref{eq:general-surrogate-with-L2}. We repeat the development of Section \ref{subsec:Optimimum-majorizing-function} (and the proofs of Appendix \ref{app:proof-subproblem-updates}). We end up with and update that is very similar to \eqref{eq:update-MU} or \eqref{eq:update-with-QU}. In the MU case, we end up with:
\[
x_{j}^{t+1}=x_{j}^{t}\frac{\alpha_{j}}{\beta_{j}{\color{cyan}+\nu e_{j}}{\color{violet}-\mu_{j}}},
\]
where the only final difference consists of two terms in cyan and violet ($\alpha_{j}$ and $\beta_{j}$ remain identical). For the QU, we stick to the same update rule (\ref{eq:update-with-QU}), where only $\beta_{j}^{t}$ is modified:
\[
\beta_{j}^{t\prime}=\beta_{j}^{t}{\color{cyan}+\nu e_{j}}{\color{violet}-\mu_{j}}.
\]
This update rule ensures the first of the KKT conditions (stationarity). We now find $\nu, \bm{\mu}$ such that the second KKT condition holds (primal feasibility). It turns out that $\bm{\mu}$ does not need to be computed explicitly. In the MU case, $\mu_j$ is selected to be large enough such that
\begin{align}
x_{j}^{t+1} & =\max\left(x_{j}^{t}\frac{\alpha_{j}}{\beta_{j}{\color{cyan}+\nu e_{j}}},\epsilon\right)=\frac{x_{j}^{t}\alpha_{j}}{\min\left({\color{violet}\frac{x_{j}^{t}\alpha_{j}}{\epsilon}},\beta_{j}{\color{cyan}+\nu e_{j}}\right)}.\label{eq:MU-rule-with-constraint}
\end{align}
In the QU case, we obtain
\begin{align}
x_{j}^{t+1} & =\max\left(\frac{-\left(\beta_{j}^{t}{\color{cyan}+\nu e_{j}}\right)+\sqrt{\left(\beta_{j}^{t}{\color{cyan}+\nu e_{j}}\right)^{2}+4\alpha\zeta_{j}^{t}}}{2\alpha},\epsilon\right)\label{eq:QU-with-constraint}\\
 & =\frac{-\min\left({\color{violet}\frac{\zeta_{j}^{t}}{\epsilon}-\epsilon\alpha},\beta_{j}^{t}{\color{cyan}+\nu e_{j}}\right)+\sqrt{\left(\min\left({\color{violet}\frac{\zeta_{j}^{t}}{\epsilon}-\epsilon\alpha},\beta_{j}^{t}{\color{cyan}+\nu e_{j}}\right)\right)^{2}+4\alpha\zeta_{j}^{t}}}{2\alpha}\nonumber 
\end{align}
Note that dual feasibility and complementary slackness could be verified, but we leave them out for simplicity. We then need to find the value of $\nu$ such that $\bm{e}^{\top}\bm{x}=1$, which is equivalent to searching for
\begin{align}
h_{1}\left(\nu\right) & =\sum_{j}e_{j}\frac{x_{j}^{t}\alpha_{j}}{\min\left({\color{violet}\frac{x_{j}^{t}\alpha_{j}}{\epsilon}},\beta_{j}^{t}{\color{cyan}+\nu e_{j}}\right)}-1=0\label{eq:dichotomy-algo-surrogate}
\end{align}
Similarly, for the quadratic update of (\ref{eq:update-with-QU}), we search for $\nu$ that satisfies
\begin{equation}
h_{2}\left(\nu\right)=\sum_{j}e_{j}\frac{-\min\left({\color{violet}\frac{\zeta_{j}^{t}}{\epsilon}-\epsilon\alpha},\beta_{j}^{t}{\color{cyan}+\nu e_{j}}\right)+\sqrt{\left(\min\left({\color{violet}\frac{\zeta_{j}^{t}}{\epsilon}-\epsilon\alpha},\beta_{j}^{t}{\color{cyan}+\nu e_{j}}\right)\right)^{2}+4\alpha\zeta_{j}^{t}}}{2\alpha}-1=0.\label{eq:dichotomy-algo-QU}
\end{equation}
There is no closed-form solution for $\nu$; however, the value can be found using a simple dichotomy search. Bounds for starting the dichotomy are computed in Appendix~\ref{app:bound-dichotomy}.

\paragraph{Case $\epsilon=0$}
Most of our reasoning relies on the fact that $x_{i}>0$ and, therefore, on the domain constraint $\epsilon>0$. 
We have found that setting $\epsilon$ to a small non-zero value works well in practice. 
However, our approach can likely be generalized to the case where $\epsilon=0$, following the approach of \cite[Section 4]{lin2007convergence}, which studies the unregularized Poisson NMF case.

\section{Algorithms for Poisson matrix factorisation\label{sec:Algorithms}}

Equipped with the update rules developed in the previous Sections \ref{sec:Subproblem-optimisation} and \ref{sec:Generalized-simplex-constraint}, we are ready to tackle the general problem of this contribution\footnote{Here we show the problem with the linear constraint on $\MH$, however, by symmetry, a similar algorithm can be developed with the constraint on $\MW$.} which consists of minimizing \eqref{eq:general-poisson-loss}:
\begin{align}
\dot{\MW},\dot{\MH}=\argmin_{\MW,\MH} & -\left\langle \MY,\log\left(\MW\MH\right)\right\rangle +\left\langle \mathbf{1},\MW\MH\right\rangle +R_{W}\left(\MW\right)+R_{H}\left(\MH\right)\label{eq:general-problem-detail}\\
\text{such that} & \MW\geq\epsilon,\MH\geq\epsilon,\bm{e}^{\top}\MH=\bm{1}\nonumber 
\end{align}
We first observe that with respect to each variable $\MW, \MH$, the problem is separable by column/row. 
For example, given $\MW$, finding the optimal $\MH$ can be done for each column:
\[
\dot{\bm{h}_{i}}=\argmin_{\bm{h}_{i}}-\left\langle \bm{y}_{i},\log\left(\MW\bm{h}_{i}\right)\right\rangle +\left\langle \mathbf{1},\MW\bm{h}_{i}\right\rangle +r_{H}\left(\bm{h}_{i}\right)\hspace{1em}\text{such that}\hspace{1em}\bm{h}_{i}\geq\epsilon,\bm{e}^{\top}\bm{h}_{i}=1
\]
We therefore apply the TBSUM Algorithm~\ref{alg:TBSUM}, where all lines of $\MW$ and all columns of $\MH$ are updated independently, and obtain the two Algorithms~\ref{alg:KL-gen-alg-with-KL-majorization} and~\ref{alg:KL-gen-alg-with-quadratic-majorization}. We note here that the function $max(\cdot, \epsilon)$ ensures that the solutions are $\geq \epsilon$ (non-negativity).

\begin{algorithm}[H]
\caption{MU Algorithm for Regularized Poisson NMF}
\label{alg:KL-gen-alg-with-KL-majorization}
\begin{algorithmic}[1]
\State Initialize the variables $\MW^{0}\geq\epsilon$, $\MH\geq\epsilon$, $t=0$ such that $\bm{e}^{\top}\MH=\bm{1}$.
\While{some convergence criterion is met}
    \For{each line $\bm{w}_{i}^{t\top}$ of $\MW^{t}$}
        \State Compute $\bm{\alpha}_{i}^{t\top}$ and $\bm{\beta}_{i}^{t\top}$ using (\ref{eq:alpha-KL}) and \eqref{eq:beta-KL}.
        \State Update using the MU rule \eqref{eq:MU-rule-with-constraint}: $w_{ij}^{t+1}\leftarrow\max\left(w_{ij}^{t}\frac{\alpha_{ij}^{t}}{\beta_{ij}^{t}},\epsilon\right)$.
    \EndFor
    \For{each column $\bm{h}_{i}^{t}$ of $\MH^{t}$}
        \State Compute $\bm{\alpha}_{i}^{t}$ and $\bm{\beta}_{i}^{t}$ using \eqref{eq:alpha-KL} and (\ref{eq:beta-KL}).
        \State Find the dual variable $\nu_{i}$ by dichotomy of the function \eqref{eq:dichotomy-algo-surrogate} (set $\nu=0$ if no constraint is present).
        \State Update using the MU \eqref{eq:MU-rule-with-constraint}: $h_{ij}^{t+1}\leftarrow\max\left(h_{ij}^{t}\frac{\alpha_{ij}^{t}}{\beta_{ij}^{t}+\nu_{i}e_{j}},\epsilon\right)$.
    \EndFor
    \State $t\leftarrow t+1$.
\EndWhile
\end{algorithmic}
\end{algorithm}

\begin{algorithm}[H]
\caption{QU Algorithm for Regularized Poisson NMF}
\label{alg:KL-gen-alg-with-quadratic-majorization}
\begin{algorithmic}[1]
\State Initialize the variables $\MW^{0}\geq\epsilon$, $\MH\geq\epsilon$, $t=0$ such that $\bm{e}^{\top}\MH=\bm{1}$.
\While{some convergence criterion is met}
    \For{each line $\bm{w}_{i}^{t\top}$ of $\MW^{t}$}
        \State Compute $\alpha^{t}$, $\bm{\beta}_{i}^{t\top}$, and $\bm{\gamma}_{i}^{t\top}$ using (\ref{eq:alpha-beta-zeta-quadratic}).
        \State Update using the QU rule \eqref{eq:QU-with-constraint}: $w_{ij}^{t+1}\leftarrow\max\left(\frac{-\beta_{ij}^{t}+\sqrt{\left(\beta_{ij}^{t}\right)^{2}+4\alpha^{t}\zeta_{ij}^{t}}}{2\alpha^{t}},\epsilon\right)$.
    \EndFor
    \For{each column $\bm{h}_{i}^{t}$ of $\MH^{t}$}
        \State Compute $\alpha^{t}$, $\bm{\beta}_{i}^{t}$, and $\bm{\gamma}_{i}^{t}$ using \eqref{eq:alpha-beta-zeta-quadratic}.
        \State Find the dual variable $\nu_{i}$ by dichotomy of the function \eqref{eq:dichotomy-algo-QU} (set $\nu=0$ if no constraint is present).
        \State Update using the QU rule \eqref{eq:QU-with-constraint}: $h_{ij}^{t+1}\leftarrow\max\left(\frac{-\left(\beta_{ij}^{t}+\nu_{i}e_{j}\right)+\sqrt{\left(\beta_{ij}^{t}+\nu_{i}e_{j}\right)^{2}+4\alpha^{t}\zeta_{ij}^{t}}}{2\alpha^{t}},\epsilon\right)$.
    \EndFor
    \State $t\leftarrow t+1$.
\EndWhile
\end{algorithmic}
\end{algorithm}

\paragraph{Convergence}
The two update rules in steps 5 and 10 correspond to minimizing first-order strictly convex majorization functions. As a result, we can apply Theorem \ref{thm:TBSUM-convergence} \emph{to guarantee convergence towards a coordinate-wise minimum}. 
It is important to note that \emph{this coordinate-wise minimum is also a stationary point}, given that the objective function remains regular for any point $\MW\in\mathcal{C}_{W},\MH\in\mathcal{C}_{H}$.

\subsection{Algorithm complexity\label{subsec:Algorithm-complexity}}
Let's examine the complexity of both Algorithms \ref{alg:KL-gen-alg-with-KL-majorization} and \ref{alg:KL-gen-alg-with-quadratic-majorization} when considering $\MW\in\mathbb{R}^{n\times k}$ and $\MH\in\mathbb{R}^{k\times m}$. In each iteration, the following complexities are observed: \\
(a) Step 4 has a complexity of $\mathcal{O}\left(nmk\right)$. \\
(b) Step 5 has a complexity of $\mathcal{O}\left(nk\right)$.\\
(c) Step 8 has a complexity of $\mathcal{O}\left(nmk\right)$.\\
(d) Step 9 has a complexity of $\mathcal{O}\left(c_{d}km\right)$, where $c_{d}$ denotes the number of iterations performed by the dichotomy.\\
(e) Step 10 has a complexity of $\mathcal{O}\left(mk\right)$.\\
Thus, the overall complexity per iteration can be expressed as $\mathcal{O}\left(nmk\right)+\mathcal{O}\left(c_{d}km\right)=\mathcal{O}\left(\left(n+c_{d}\right)mk\right)$. This indicates that the computational complexity per iteration is linear with respect to the problem size, i.e., $nm$, multiplied by the number of components, i.e., $k$. 
\paragraph{Impact of the dichotomy}
When $n$ is small, the computational cost of the dichotomy in step 4 becomes dominant. Nevertheless, in general, for larger values of $n$, the impact of the dichotomy becomes negligible.

\subsection{Tight Majorizing Functions \label{subsec:Tight-majorizing-functions}}

While we do not make any theoretical contributions concerning the speed of convergence of the algorithm, we want to emphasize the natural fact that \emph{tighter majorizing functions lead to faster convergence}. Therefore, when evaluating an algorithm, we believe that the analysis of the underlying majorizing function is as insightful as the experimental evaluation. As an example, we could have used Block Mirror Descent to solve (\ref{eq:general-problem}), as was done in \cite[Algorithm 1]{hien2021algorithms}. This algorithm uses a Bregman Difference to create a majorization function for the subproblem. However, this would result in a much looser majorization function, which partly explains the slow convergence of this algorithm observed in \cite{hien2021algorithms}. This difference between majorization functions is exemplified in Figure \ref{fig:tight-majorizing-functions}.

\begin{figure}
\centering{}\includegraphics[width=0.45\linewidth]{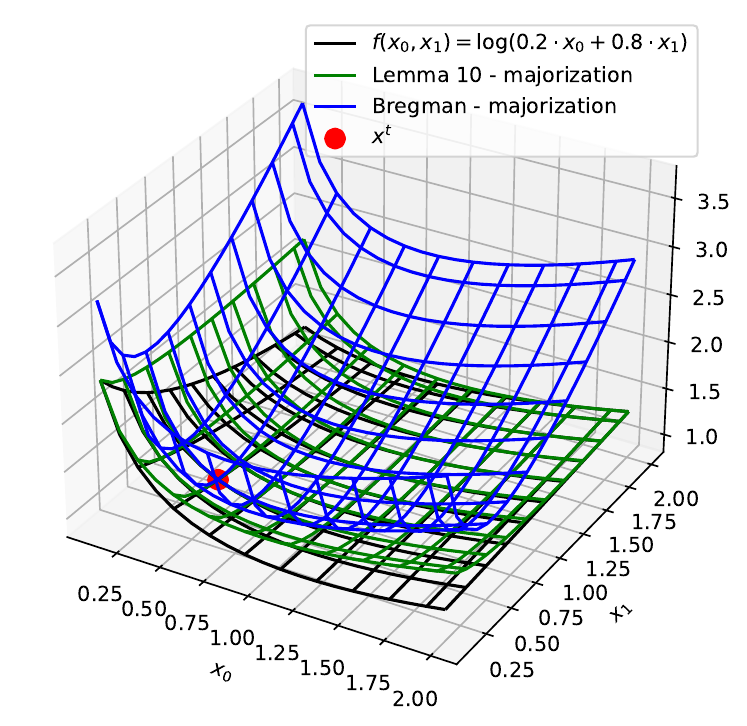}\includegraphics[width=0.45\linewidth]{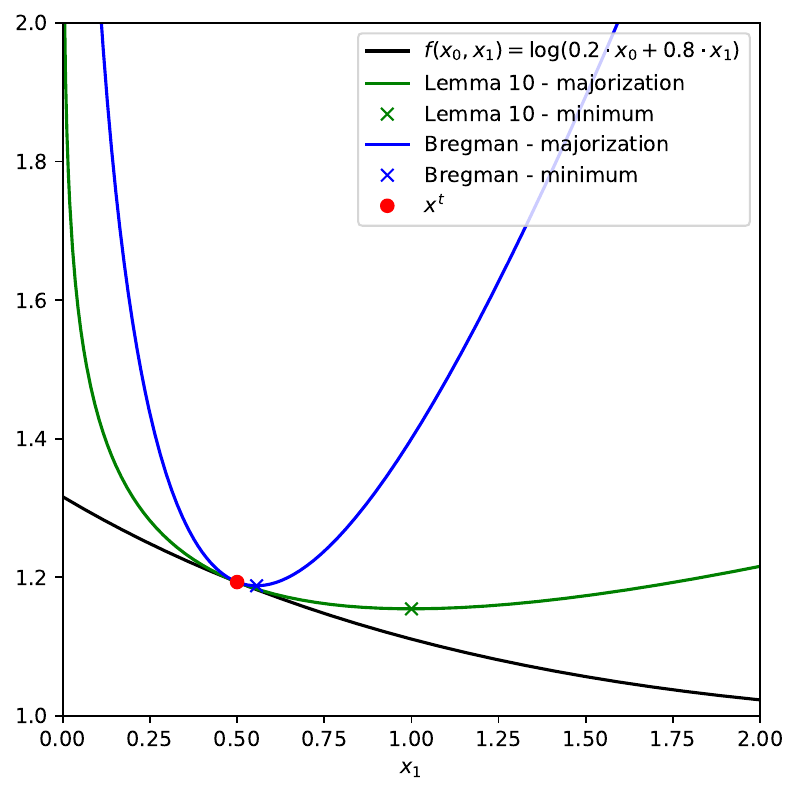}\caption{\label{fig:tight-majorizing-functions}Two first-order majorizing functions for the function $f(x_0,x_1) = \log(0.2x_0 + 0.8x_1)$ are considered. We observe that Lemma (\ref{lem:EM-majorization}) provides a tighter majorization than Lemma (\ref{lem:bregman-majorization}) for $f$, resulting in a more optimal update $\bm{x}^{t+1}$.}
\end{figure}

\paragraph{Linesearch}
By tightening the bounds we used to construct the surrogate function, we can develop a more efficient algorithm. Here, we apply a classic "linesearch" method to the functions $s_L$. However, the same technique can be trivially applied to $s_R$ as well.
First, in (\ref{eq:looser-majorizing-function-smooth-term}) or (\ref{eq:tighter-majorizing-function-smooth-term}), replace the constant $\sigma_L$ with a parameter $\gamma$ and initialize it with $\sigma_L$.
Second, at each iteration, update the parameter $\gamma$ according to the following rule:
\[
\gamma^{t+1}=\begin{cases}
\upsilon\gamma^{t} & \text{if}\hspace{1em}s_{L}\left(\bm{x}\right)\geq g\left(\bm{x},\bm{x}^{t},\gamma\right)\\
\frac{1}{\tau}\gamma^{t} & \text{otherwise.}
\end{cases}
\]
Here, $\upsilon$ and $\tau$ are two update rates that determine how fast $\gamma$ is updated. Choosing values that are too small for these parameters leads to an inefficient linesearch, while selecting values that are too large can result in strong oscillation patterns. Typical values for $\upsilon$ and $\tau$ range from 1.05 to 1.5.
However, it is important to note that when using linesearch, we are not guaranteed to converge, as we might invalidate the assumptions of Theorem \ref{thm:TBSUM-convergence}.

\section{Numerical Simulation \label{sec:application}}

\paragraph{Problem}
In this section, we analyze the speed of convergence of Algorithms \ref{alg:KL-gen-alg-with-KL-majorization} and \ref{alg:KL-gen-alg-with-quadratic-majorization} through numerical simulations. As a regularizer for $\MH$, we consider the Laplacian regularization $R(\MH,\lambda) = \frac{\lambda}{2}\text{tr}(\MH^T\Delta\MH)$, where $\Delta$ represents the two-dimensional Laplacian for the $k$th line of $\MH \in \mathbb{R}^{k \times p^2}$ reshaped as $k$ images of size $p \times p$. Since a straightforward approach to minimize $R(\MH,\lambda)$ is to reduce the amplitude of $\MH$, we add the simplex constraint $\bm{1}^T\MH = \bm{1}$. This leads to the following optimization problem:
\begin{align}
\dot{\MW}, \dot{\MH} = & \arg\min_{\MW, \MH} -\left\langle \MY, \log(\MW\MH) \right\rangle + \left\langle \mathbf{1}, \MW\MH \right\rangle + \frac{\lambda}{2}\text{tr}(\MH^T\Delta\MH) \label{eq:experiment-problem} \
& \text{subject to } \MW \geq \epsilon, \MH \geq \epsilon, \bm{1}^T\MH = \bm{1} \nonumber
\end{align}
This particular problem can be applied in various domains, such as Non-Negative Matrix Factorization for hyperspectral images \cite{lu2012manifold} and remote sensing \cite{li2019cloud} (See Related Work Section \ref{sec:Related-work} for more references and applications).
Our algorithms and regularisations were specifically developed for the \emph{espm} python package \cite{teurtrie2023espm}. All algorithms and experiments can be found in the \emph{espm} package.

\paragraph{Dataset}
We construct two datasets consisting of 50 randomly drawn samples. In the first dataset, both matrices $\MW$ and $\MH$ are randomly generated from a uniform distribution. In the second dataset, each column of $\MW$ corresponds to the sum of Gaussian functions that are randomly centered and scaled. The matrix $\MH$ represents random smooth images. This second dataset is created using the \emph{espm} package \cite{teurtrie2023espm}, where the toy model is used for $\MW$ and $\MW$ is generated using the "laplacian" weight type.
This choice of dataset is selected because it can benefit from the Laplacian regularization on $\MH$.

Once $\MW$ and $\MH$ are generated, the noiseless matrix $\MY$ is obtained as $\MY = \MW\MH$. We introduce noise by independently sampling each element $\tilde{\MY}{ij} \sim \frac{\text{Poisson}(\lambda\MY{ij})}{\lambda}$, where $\lambda$ can be regarded as the noise control parameter. For all samples, we set $\MW \in \mathbb{R}^{n \times k}$ and $\MH \in \mathbb{R}^{k \times p^2}$ with $k = 3$, $p = 64$, and $n$ selected from the set ${25, 100, 500, 1000}$. Thus, the images in the dataset have dimensions of $64 \times 64$.

\paragraph{Results}
We compare the performance of Algorithm \ref{alg:KL-gen-alg-with-KL-majorization} (MU), Algorithm \ref{alg:KL-gen-alg-with-quadratic-majorization} (QU), Block Mirror Descent (similar to \cite[Algorithm 1]{hien2021algorithms}) and the projected gradient algorithm applied to (\ref{alg:KL-gen-alg-with-quadratic-majorization}). 
Figure \ref{fig:convergence-curves} displays the convergence curves for 1000 iterations and $n=25$. 
Although the overall complexity of all algorithms is the same, the time per iteration differs due to the different operations performed within each iteration and the time spent on dichotomy to compute the dual variable $\nu$. 
Therefore, we provide the time in seconds for each algorithm in Figure \ref{fig:convergence-time} for various value of $n$. All results are averaged over 50 repetitions.

\begin{figure}[ht]
\begin{centering}
\begin{tabular}{c>{\centering\arraybackslash}m{0.45\linewidth}>{\centering\arraybackslash}m{0.45\linewidth}}
 & Smooth images samples & Random uniform $\MW$, $\MH$ samples \tabularnewline 
\rotatebox{90}{\makebox[0pt]{Noiseless}} 
& \includegraphics[width=1\linewidth]{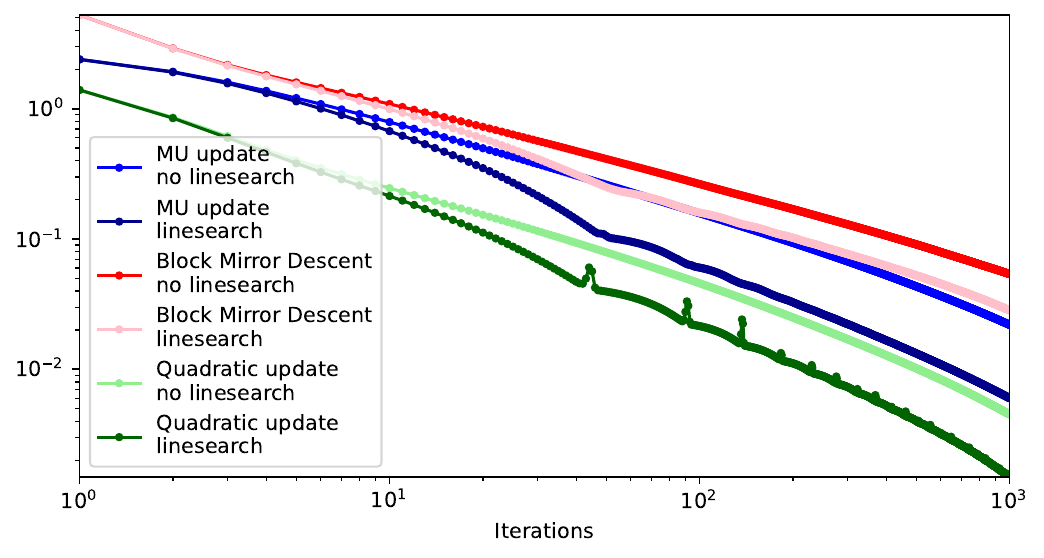} & \includegraphics[width=1\linewidth]{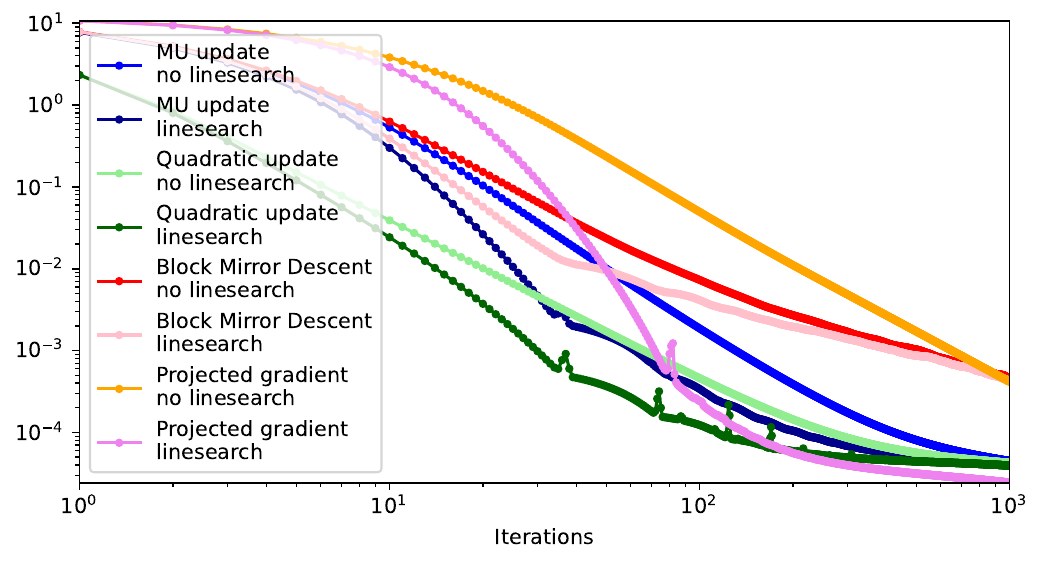}\tabularnewline
\rotatebox{90}{Noisy} & \includegraphics[width=1\linewidth]{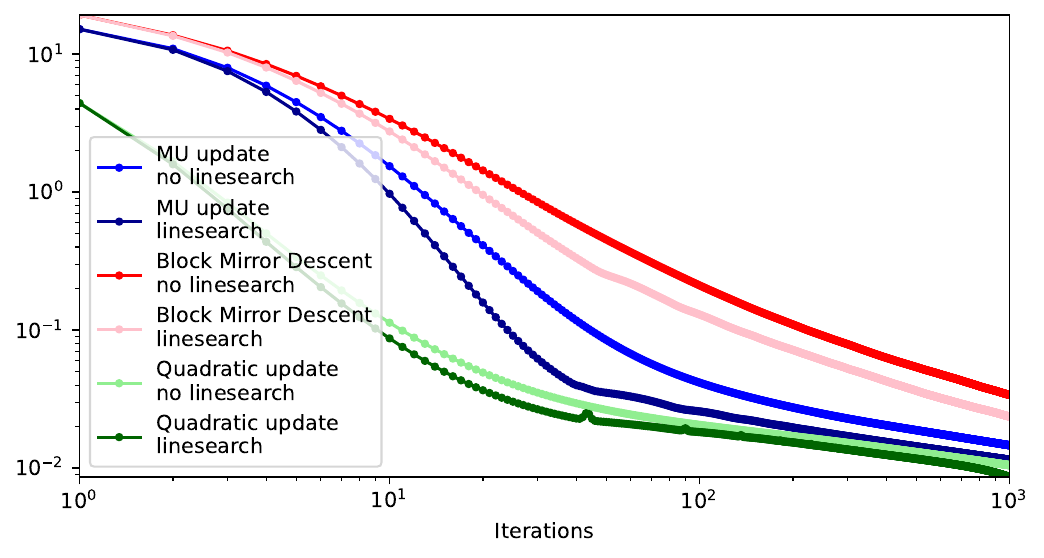} & \includegraphics[width=1\linewidth]{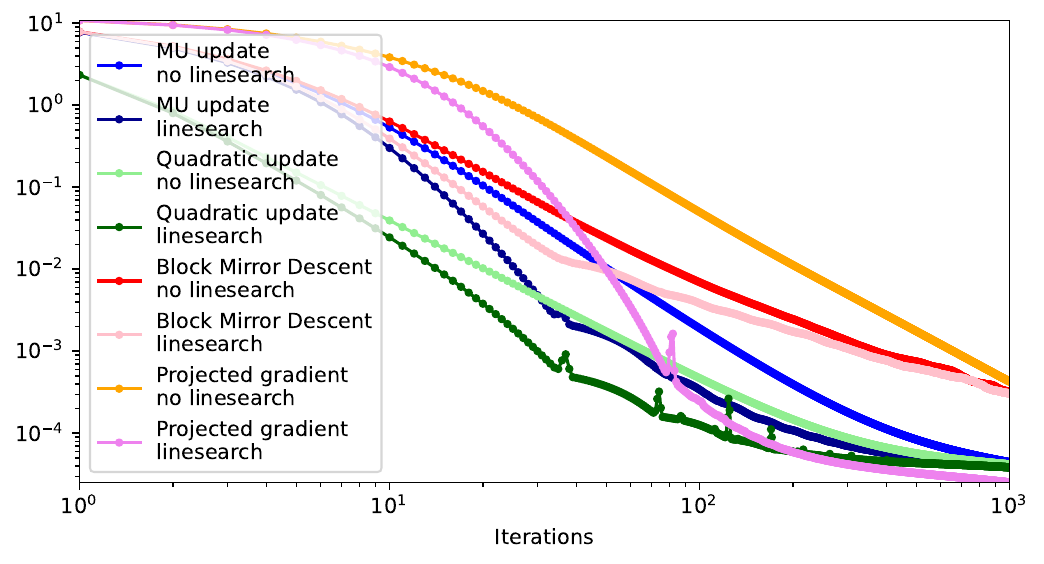}\tabularnewline
\end{tabular}
\par\end{centering}
\caption{\label{fig:convergence-curves}Convergence curves for 1000 iterations.
We remove the minimum loss $\mathcal{L}\left(\dot{\MW},\dot{\MH}\right)$}
\end{figure}

\begin{figure}[ht]
\begin{centering}
\includegraphics[width=0.95\linewidth]{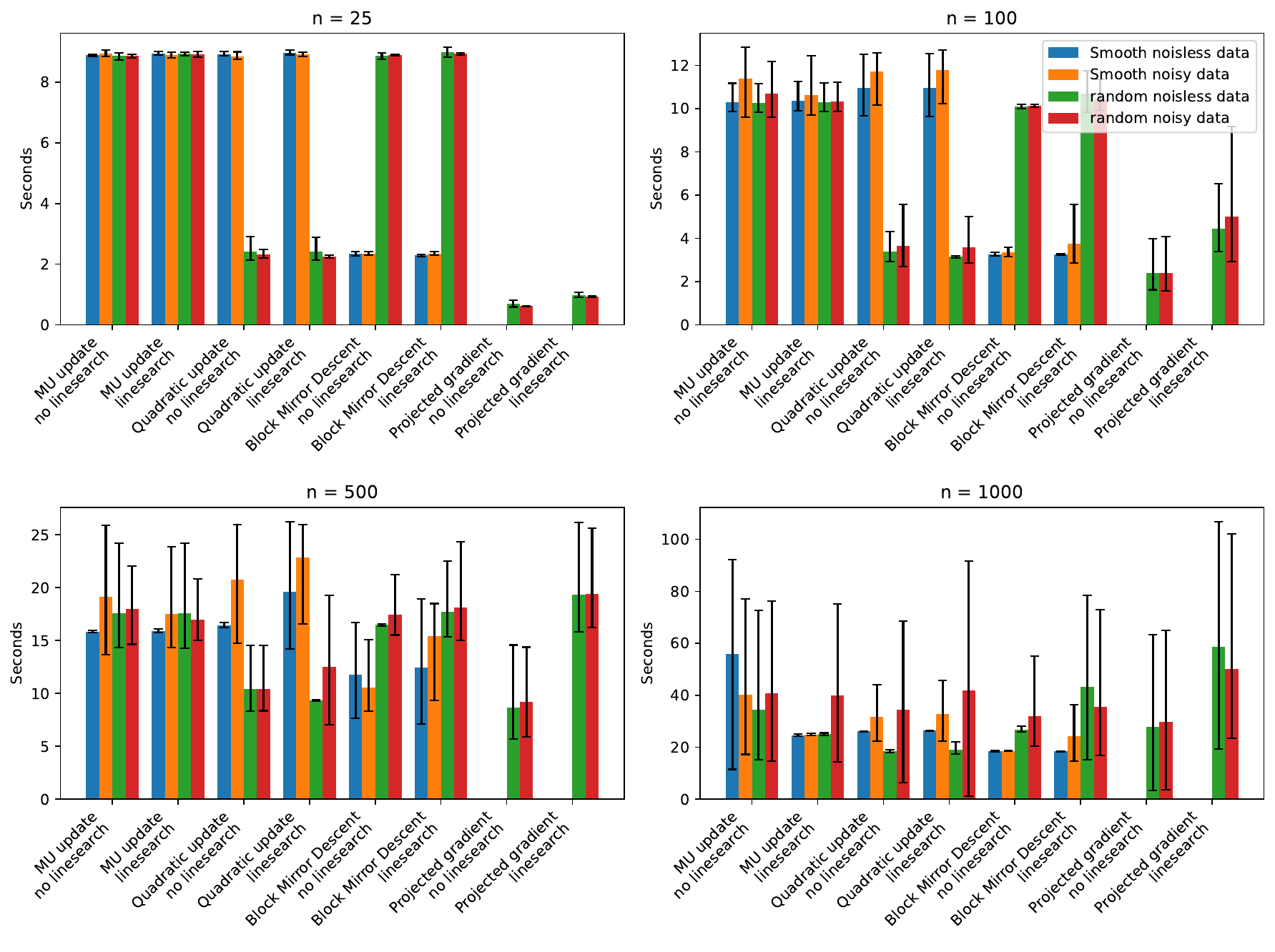}
\end{centering}
\caption{\label{fig:convergence-time}Execution time for 100 iterations for
different problem sizes. Here we fix the dimension of $\MH$ to $3\times64^{2}$
and varies $\MW$ from $25\times3$ to $1000\times3$.}
\end{figure}

\paragraph{Discussion}
Let's discuss the results in more detail: \\
\emph{ - Number of iterations:} Figure \ref{fig:convergence-curves} illustrates the convergence behavior of QU and MU algorithms. It is evident that QU converges faster per iteration compared to MU, which aligns with our expectations due to the tighter majorizing function used in QU. However, it is important to note that the introduction of the linesearch technique, while accelerating convergence, can lead to occasional instability, as indicated by occasional increases in the loss function. This observation supports our earlier discussion in Section \ref{subsec:Tight-majorizing-functions}. 
The challenge with Projected Gradient Descent is that we need to find an initial learning rate that is not too large, as the algorithm can diverge and not too small as the algorithm can be slow. Overall, since the selected learning rate cannot be selected optimally, the algorithm is slower than QU and MU.
The Block Mirror Descent algorithm is also slower than QU and MU, which is consistent with the results of \cite{hien2021algorithms} and can be explained by the fact that the majorizing function used in the algorithm is looser.
\\
\emph{ - Time per iteration:} Figure \ref{fig:convergence-time} presents the total time taken by the algorithms to complete 100 iterations. The results demonstrate that for small values of $n$, the computation of the $p^2$ dual variable during the dichotomy process dominates the overall execution time. However, as $n$ increases, the time spent on dichotomy becomes negligible in comparison. These findings align with the complexity per iteration discussed in Section \ref{subsec:Algorithm-complexity}.

\section{Conclusion\label{sec:Conclusion}}
This contribution is the first to address the Poisson NMF problem with general regularization terms, such as Lipschitz functions, relatively smooth functions, or those expressed as linear constraints. We introduce two new algorithms and demonstrate their convergence to a coordinate-wise minimum, which is also a stationary point. Emphasizing the impact of the majorizing function choice on convergence speed, we validate our findings through numerical simulations. In essence, we believe that this work serves as a helpful guide for developing efficient algorithms suited for regularized Poisson NMF problems.

\appendix

\section{Proofs}
In this Appendix, we provide the different proofs used in the paper.

\subsection{Proof of Lemma~\ref{lem:contiuous-differentiable-regular}}
\label{app:proof-lem-continuous-differentiable-regular}
We note that this lemma and its proof likely exist in the literature, but we were unable to find a reference.
\contiuousdifferentiableregular*
\begin{proof}
If $\mathcal{L}$ is continuously differentiable, the directional derivative can be written as $\mathcal{L}'(\bm{z};\bm{d})=\bm{d}^\top \nabla\mathcal{L}(\bm{z})$. At a coordinatewise minimum $\bm{z}$, we have by definition:
\[
\mathcal{L}^{\prime}\left(\bm{z};\left[\bm{d}_{w},\bm{0}\right]\right)=\nabla\mathcal{L}\left(\bm{z}\right)\left[\bm{d}_{w},\bm{0}\right]^{\top}\geq0
\]
and
\[
\mathcal{L}^{\prime}\left(\bm{z};\left[\bm{0},\bm{d}_{h}\right]\right)=\nabla\mathcal{L}\left(\bm{z}\right)\left[\bm{0},\bm{d}_{h}\right]^{\top}\geq0.
\]
Therefore,
\begin{align*}
\mathcal{L}'(\bm{z};\bm{d}) &= \nabla\mathcal{L}(\bm{z})[\bm{d}_{w},\bm{d}_{h}]^\top \\
&= \nabla\mathcal{L}(\bm{z})([\bm{d}_{w},\bm{0}]^\top + [\bm{0},\bm{d}_{h}]^\top) \\
&= \nabla\mathcal{L}(\bm{z})[\bm{d}_{w},\bm{0}]^\top + \nabla\mathcal{L}(\bm{z})[\bm{0},\bm{d}_{h}]^\top \\
&\geq 0.
\end{align*}
\end{proof}

\subsection{Proof of majorizing functions}
\label{app:proof-majorizing-functions}

\logmajorization*

\begin{proof}
First let us observe that 
\begin{align*}
g\left(\bm{x}^{t},\bm{x}^{t}\right) 
& =-\sum_{j}\frac{a_{j}x_{j}^{t}}{\sum_{k}a_{k}x_{k}^{t}}\log\left(\frac{a_{j}x_{j}^{t}}{\frac{a_{j}x_{j}^{t}}{\sum_{k}a_{k}x_{k}^{t}}}\right) \\
& =-\sum_{j}\frac{a_{j}x_{j}^{t}}{\sum_{k}a_{k}x_{k}^{t}}\log\left(\sum_{k}a_{k}x_{k}^{t}\right) \\
& =-\log\left(\sum_{k}a_{k}x_{k}^{t}\right)=f\left(\bm{x}^{t}\right).    
\end{align*}
Therefore, we have $g\left(\bm{x},\bm{x}^{t}\right)\geq f\left(\bm{x}\right)$. The inequality follows from the convexity of the $-\log$ function:
\begin{align*}
-\log\left(\sum_{j}a_{j}x_{j}\right) & =-\log\left(\sum_{j}q_{j}\frac{a_{j}x_{j}}{u_{j}}\right)\leq-\sum_{j}q_{j}\log\left(\frac{a_{j}x_{j}}{u_{j}}\right)
\end{align*}
where we set $q_{j}=\frac{a_{j}x_{j}^{t}}{\sum_{k}a_{k}x_{k}^{t}}$.
Finally, by continuity, we obtain the $3^{\text{rd}}$ and $4^{\text{th}}$ properties of majorizing functions.
\end{proof}

We now proceed to majorize $s_{L}\left(\bm{x}\right)$, $s_{R}\left(\bm{x}\right)$, and $s_{C}\left(x_{j}\right)$.
\lipschitzmajorization*
\begin{proof}
First it can be trivially observed that 
\[
s_{L}\left(\bm{x}^{t}\right)=g_{1}\left(\bm{x}^{t},\bm{x}^{t}\right)=g_{2}\left(\bm{x}^{t},\bm{x}^{t}\right),
\]
which satisfies the first property. We then take $1^{\text{st}}$
order Taylor expension of $s_{L}$ around $\bm{x}^{t}$ and find 

\[
s_{L}\left(\bm{x}\right)=s_{L}\left(\bm{x}^{t}\right)+\left(\bm{x}-\bm{x}^{t}\right)^{\top}\nabla s_{L}\left(\bm{x}^{t}\right)+\text{\ensuremath{\mathcal{R}}\ensuremath{\left(\bm{x},\bm{x}^{t}\right)}}
\]
where $\text{\ensuremath{\mathcal{R}}\ensuremath{\left(\bm{x},\bm{x}^{t}\right)}}\leq\sigma_{L}\|\bm{x}-\bm{x}^{t}\|_{2}^{2}$
since the function $s_{L}$ is gradient Lipschitz with constant $\sigma_{L}$.
Therefor we have
\begin{align}
s_{L}\left(\bm{x}\right) & \leq s_{L}\left(\bm{x}^{t}\right)+\left(\bm{x}-\bm{x}^{t}\right)^{\top}\nabla s_{L}\left(\bm{x}^{t}\right)+\sigma_{L}\|\bm{x}-\bm{x}^{t}\|_{2}^{2}=g_{1}\left(\bm{x},\bm{x}^{t}\right) \nonumber\\
    & \leq s_{L}\left(\bm{x}^{t}\right)+\left(\bm{x}-\bm{x}^{t}\right)^{\top}\nabla s_{L}\left(\bm{x}^{t}\right)+2\sigma_{L}\left(\max_{j}x_{j}^{t}\right)\left(\sum_{j}x_{j}^{t}\log\left(\frac{x_{j}^{t}}{x_{j}}\right)-x_{j}^{t}+x_{j}\right) \label{eq:ineq-to-be-proved}\\
    & =g_{2}\left(\bm{x},\bm{x}^{t}\right), \nonumber
\end{align}
where \eqref{eq:ineq-to-be-proved} will be shown later in this proof.
By continuity, we obtain the $3^{\text{rd}}$ and $4^{\text{th}}$
property of majorizing functions. We now need to prove \eqref{eq:ineq-to-be-proved} and reformulate it as 


\begin{equation}
\|\bm{x}-\bm{x}^{t}\|_{2}^{2}\leq2\left(\max_{j}x_{j}^{t}\right)\left(\sum_{j}x_{j}^{t}\log\left(\frac{x_{j}^{t}}{x_{j}}\right)-x_{j}^{t}+x_{j}\right)=2\left(\max_{j}x_{j}^{t}\right)D_{GKL}\left(\bm{x}\|\bm{x}^{t}\right).\label{eq:bound_l2_dgkl}
\end{equation}
For simplicity, let us define the function
\[
q\left(\bm{x}\right)=\sum_{j}x_{j}\log\left(x_{j}\right),
\]
with the gradient $\nabla_{x_{j}}q\left(\bm{x}\right)=\log\left(x_{j}\right)+1$
and the Hessian 
\[
\Mat{H}_{ij}^{q}\left(\bm{x}\right)=\frac{\partial q}{\partial x_{i}\partial x_{j}}\left(\bm{x}\right)=\begin{cases}
\frac{1}{x_{j}} & \text{if }i=j\\
0 & \text{otherwise.}
\end{cases}
\]
Note that $q$ is a strictly convex function for $\bm{x}>0$.
We expand the generalized KL divergence:
\begin{align}
D_{GKL}\left(\bm{x}\|\bm{x}^{t}\right) & =\sum_{j}x_{j}^{t}\log\left(x_{j}^{t}\right)-\sum_{j}x_{j}^{t}\log\left(x_{j}\right)-\sum_{j}x_{j}^{t}+\sum_{j}x_{j}\nonumber \\
    & =\sum_{j}x_{j}^{t}\log\left(x_{j}^{t}\right)-\sum_{j}x_{j}\log\left(x_{j}\right)-\sum_{j}\left(\log\left(x_{j}\right)+1\right)\left(x_{j}^{t}-x_{j}\right)\nonumber \\
    & =q\left(\bm{x}^{t}\right)-\left(q\left(\bm{x}\right)+\nabla q\left(\bm{x}\right)^{\top}\left(\bm{x}^{t}-\bm{x}\right)\right)\nonumber \\
    & =\frac{1}{2}\left(\bm{x}^{t}-\bm{x}\right)^{T}\Mat{H}^{q}\left(\tilde{\bm{x}}\right)\left(\bm{x}^{t}-\bm{x}\right),\label{eq:bound_dgkl_lemma}
\end{align}
where $\tilde{\bm{x}}$ is selected such that the last equality holds.
Since $q$ is a strictly convex function, we know that $\tilde{\bm{x}}=\rho\bm{x}+\left(1-\rho\right)\bm{x}^{t}$
for some give $\rho\in[0,1].$ Now we bound the Hessian as
\[
\Mat{H}^{q}\left(\bm{x}\right)\ge\frac{1}{\max_{j}x_{j}}\Mat{I}
\]
and introducing this inquality in \eqref{eq:bound_dgkl_lemma}, we obtain
\[
D_{GKL}\left(\bm{x}\|\bm{x}^{t}\right)\geq\frac{1}{2\max_{j}x_{j}}\|\bm{x}-\bm{x}^{t}\|_{2}^{2},
\]
which is equivalent to \eqref{eq:bound_l2_dgkl} and completes the proof.
\end{proof}

\relativesmoothnessmajorization*
\begin{proof}
The first property $g\left(\bm{x}^{t},\bm{x}^{t}\right)=s_{R}\left(\bm{x}^{t}\right)$
can be trivially verified. Then using by the definition of relatively
smoot function
\[
s_{R}\left(\bm{x}\right)\leq s_{R}\left(\bm{x}^{t}\right)+\left\langle \nabla s_{R}\left(\bm{x}^{t}\right),\bm{x}-\bm{x}^{t}\right\rangle +\sigma_{R}\mathcal{B}_{\kappa}\left(\bm{x},\bm{x}^{t}\right)
\]
where
\[
\mathcal{B}_{\kappa}\left(\bm{x},\bm{x}^{t}\right):=\kappa\left(\bm{x}\right)-\kappa\left(\bm{x}^{t}\right)-\left\langle \nabla\kappa\left(\bm{x}^{t}\right),\bm{x}-\bm{x}^{t}\right\rangle .
\]
Given that $\kappa\left(\bm{x}\right)=-\bm{1}^{\top}\log\left(\bm{x}\right)$,
we compute
\[
\mathcal{B}_{\kappa}\left(\bm{x},\bm{x}^{t}\right)=\sum_{i}^{n}\left(\frac{x_{i}}{x_{i}^{t}}-\log\left(\frac{x_{i}}{x_{i}^{t}}\right)-1\right)
\]
Finally, by continuity, we obtain the $3^{\text{rd}}$ and $4^{\text{th}}$
property of majorizing functions.
\end{proof}

\concavemajorisation*
\begin{proof}
One can simply observe $s\left(x^{t}\right)=g\left(x^{t},x^{t}\right)$.
Then by concavity, we have
\[
s\left(x_{i}\right)\leq s\left(x_{i}^{t}\right)+\frac{\partial s\left(x_{i}^{t}\right)}{\partial x_{i}}\left(x_{i}-x_{i}^{t}\right)
\]
Finally, by continuity, we obtain the $3^{\text{rd}}$ and $4^{\text{th}}$
property of majorizing functions.
\end{proof}

\subsection{Proof of subproblem updates}
\label{app:proof-subproblem-updates}
In this subsection, we present the proof of the subproblem updates used in  the MU and QU algorithms. Let us start with the MU updates. 
\mumarjorization*
\begin{proof}
    Assuming $a_{ij}, b_{i} > 0$, \eqref{eq:function2majorize} is strictly convex, as the green term is strictly convex, and the remaining terms are convex. Consequently, \eqref{eq:function2majorize} possesses a global minimum. To identify this minimum, we seek the stationary point $\nabla_{\bm{x}}g=\bm{0}$. Due to our meticulous selection of majorizing functions, this subproblem becomes separable. When computing the gradient with respect to the variable $x_{j}$, we obtain:
    \begin{align}
    \nabla_{x_{j}}g\left(\bm{x},\bm{x}^{t}\right) & ={\color{green}-\frac{1}{x_{j}}\sum_{i}b_{i}\frac{a_{ij}x_{j}^{t}}{\sum_{k}a_{ik}x_{k}^{t}}+\sum_{i}a_{ij}}\nonumber \\
     & {\color{blue}+\nabla_{x_{j}}s_{L}\left(\bm{x}^{t}\right)+2\left(\max_{i}x_{i}^{t}\right)\sigma_{L}\left(1-\frac{x_{j}^{t}}{x_{j}}\right)}\nonumber \\
     & {\color{orange}+\nabla_{x_{j}}s_{R}\left(\bm{x}^{t}\right)+\sigma_{R}\left(\frac{1}{x_{j}^{t}}-\frac{1}{x_{j}}\right)}\nonumber \\
     & {\color{purple}+\frac{\partial s_{C}\left(x_{j}^{t}\right)}{\partial x}}=0\label{eq:grad_surrogate}
    \end{align}
    where we assume that $x_{j}, x_{j}^{t} > 0$ since $0 \notin \mathcal{C}$. Transforming the above expression, we find a multiplicative update rule \eqref{eq:update-MU} for $x_{j}$.
    \end{proof}
    
The proof of the QU updates is similar to the MU updates, expect that we need to solve a quadratic equation to obtain a closed-form solution.
\qumarjorization*
\begin{proof}
    Assuming $a_{ij}, b_{i} > 0$, Equation \eqref{eq:general-surrogate-with-L2} is strictly convex. This convexity arises from the strict convexity of the green term, coupled with the convexity of the other terms. Consequently, \eqref{eq:general-surrogate-with-L2} possesses a global minimum. To identify this minimum, we seek the stationary point by computing $\nabla_{\bm{x}}g=\bm{0}$:
    \begin{align}
    \nabla_{x_{j}}g\left(\bm{x},\bm{x}^{t}\right) & ={\color{green}-\frac{1}{x_{j}}\sum_{i}b_{i}\frac{a_{ij}x_{j}^{t}}{\sum_{k}a_{ik}x_{k}^{t}}+\sum_{i}a_{ij}}\nonumber \\
     & {\color{blue}+\nabla_{x_{j}}s_{L}\left(\bm{x}^{t}\right)+2\sigma_{L}\left(x_{j}-x_{j}^{t}\right)}\nonumber \\
     & {\color{orange}+\nabla_{x_{j}}s_{R}\left(\bm{x}^{t}\right)+\sigma_{R}\left(\frac{1}{x_{j}^{t}}-\frac{1}{x_{j}}\right)}\nonumber \\
     & {\color{purple}+\frac{\partial s_{C}\left(x_{j}^{t}\right)}{\partial x}=0}\label{eq:grad_majorizing_tight}
    \end{align}
    We observe that it is a separable quadratic function, hence the update rule named Quadratic Update (QU). 
    Solving (\ref{eq:grad_majorizing_tight}) for $x_{j} > 0$ can be rewritten as
    \[
    \alpha x_{j}^{2} + \beta_{j}^{t}x_{j} - \zeta_{j}^{t} = 0,
    \]
    where $\alpha$, $\beta_{j}^{t}$, and $\zeta_{j}^{t}$ are given in (\ref{eq:alpha-beta-zeta-quadratic}). Assuming $\zeta_{j}^{t} \neq 0$, we have $\zeta_{j}^{t} > 0$ and $4\alpha\zeta_{j}^{t} > 0$. 
    Therefore, the previous quadratic equation has two real solutions. 
    Since $\sqrt{\left(\beta_{j}^{t}\right)^{2} + 4\alpha\zeta_{j}^{t}} > \beta_{j}^{t}$, they are of opposite sign. Due to the constraint $x_{j} \geq \epsilon > 0$, we select the positive one, leading to the update rule of \eqref{eq:update-with-QU} for $x_{j}$.
    \end{proof}

\section{Computation of Lower and Upper Bounds for Dichotomy}
\label{app:bound-dichotomy}
In Section \ref{sec:Generalized-simplex-constraint}, we introduced modifications to the MU and QU algorithms to incorporate the positivity constraint $\bm{x} \geq \epsilon$ and the linear constraint $\bm{e}^\top \bm{x} = 1$. 
However, solving for the dual parameter $\nu$ in equations (\ref{eq:dichotomy-algo-surrogate}) for MU or (\ref{eq:dichotomy-algo-QU}) for QU is intractable. To address this, we propose using the dichotomy method to solve for $h(\nu) = 0$. 
Therefore, this appendix provides the computation of lower bound $\nu_{\text{low}}$ and upper bound $\nu_{\text{up}}$ such that $h(\nu_{\text{low}}) < 0$ and $h(\nu_{\text{up}}) > 0$. 
These bounds will serve as convenient initializations for the dichotomy algorithm.

\subsection{Case 1: MU}

For MU, we aim to solve equation \eqref{eq:dichotomy-algo-surrogate} for $\nu$:
\[
h_{1}\left(\nu\right)=\sum_{j}e_{j}\frac{x_{j}^{t}\alpha_{j}}{\min\left(\frac{x_{j}^{t}\alpha_{j}}{\epsilon},\beta_{j}^{t}+\nu e_{j}\right)}-1=0
\]
Terms where $e_j = 0$ can be ignored since they do not contribute to the sum. Assuming $\epsilon > 0$, the function $h_1$ is well-defined for $\nu \in \mathbb{R}$. 
Since $x_j^t \alpha_j > 0$, we have $h_{1}\left(\nu\right)=\frac{\|\bm{e}\|_{1}}{\epsilon}-1$ for $\nu\leq\nu_{\lim}=\min_{j}\left(\frac{\frac{x_{j}^{t}\alpha_{j}}{\epsilon}-\beta_{j}^{t}}{e_{j}}\right)$, and $h_1$ is monotonically decreasing for $\nu\in\left[\nu_{\lim},\infty\right[$. 
Assuming $\frac{\|\bm{e}\|_{1}}{\epsilon}-1\geq0$ (which ensures the feasibility of the constraints $\bm{x} \geq \epsilon$ and $\bm{e}^\top \bm{x} = 1$), we have $h_1(\nu_{\text{lim}}) \geq 0$ and $\lim_{\nu \to \infty} h_1(\nu) = -1$. Thus, there exists exactly one root for the function $h_1$.

\paragraph{Negative bound}
First, let's find $\nu_{\text{low}}$ such that $h_1(\nu) < 0$ for $\nu_{\text{low}} \leq \nu < \infty$. We can bound $h_1$ as follows:
\begin{align*}
h_{1}\left(\nu\right) & =\sum_{j}\frac{x_{j}^{t}\alpha_{j}}{\min\left(\frac{x_{j}^{t}\alpha_{j}}{\epsilon e_{j}},\frac{\beta_{j}^{t}}{e_{j}}+\nu\right)}-1\\
 & \leq n\frac{\max_{j}x_{j}^{t}\alpha_{j}}{\min_{j}\min\left(\frac{x_{j}^{t}\alpha_{j}}{\epsilon e_{j}},\frac{\beta_{j}^{t}}{e_{j}}+\nu\right)}-1<0
\end{align*}
Therefore one possible bound is
\[
\nu_{\text{low}}=n\max_{j}x_{j}^{t}\alpha_{j}-\min_{j}\frac{\beta_{j}^{t}}{e_{j}}.
\]

\paragraph{Positive bound}
Similarly, let's find $\nu_{\text{up}}$ such that $h_1(\nu) \geq 0$ for $\nu_{\text{up}} \geq \nu \geq \nu_{\text{lim}}$. Note that $\nu_{\text{lim}}$ is not a good bound when $\epsilon$ is small. We can bound $h_1$ as follows:
\begin{align*}
h_{1}\left(\nu\right) & =\sum_{j=1}^{n}\frac{x_{j}^{t}\alpha_{j}}{\min\left(\frac{x_{j}^{t}\alpha_{j}}{\epsilon e_{j}},\frac{\beta_{j}^{t}}{e_{j}}+\nu\right)}-1\\
 & \geq\max_{j}\frac{x_{j}^{t}\alpha_{j}}{\min\left(\frac{x_{j}^{t}\alpha_{j}}{\epsilon e_{j}},\frac{\beta_{j}^{t}}{e_{j}}+\nu\right)}-1\\
 & \geq\max_{j}\frac{x_{j}^{t}\alpha_{j}}{\frac{\beta_{j}^{t}}{e_{j}}+\nu}-1>0,
\end{align*}
As a result, we have 
\[
\nu_{\text{up}}=\max_{j}\left(x_{j}^{t}\alpha_{j}-\frac{\beta_{j}^{t}}{e_{j}}\right)
\]
To improve numerical stability, one could use $\nu_{\text{low}}^{\prime}=2n\max_{j}x_{j}^{t}\alpha_{j}-\min_{j}\frac{\beta_{j}^{t}}{e_{j}}$
and $\nu_{\text{up}}^{\prime}<\max_{j}\frac{x_{j}^{t}\alpha_{j}}{2}-\frac{\beta_{j}^{t}}{e_{j}}$. 

\subsection{Case 2: QU}

For QU, we aim to solve equation \eqref{eq:dichotomy-algo-QU} for $\nu$:
\[
h_{2}\left(\nu\right)=\sum_{j}e_{j}\max\left(\frac{-\beta_{j}^{t}-\nu e_{j}+\sqrt{\left(\beta_{j}^{t}+\nu e_{j}\right)^{2}+4\alpha\zeta_{j}^{t}}}{2\alpha},\epsilon\right)-1=0.
\]
Let's analyze the function $h_{2}$. We observe that the function $-x + \sqrt{x^2 + \delta}$ is strictly decreasing over $\mathbb{R}$ for $\delta > 0$, since its derivative $-1 + \frac{x}{\sqrt{x^2 + \delta}}$ is strictly negative for $\delta > 0$. 
Therefore, each term of the sum is decreasing, and $h_{2}$ is a decreasing function.
Note that once at least for one $j$, we have $$-\beta_{j}^{t} - \nu e_{j} + \sqrt{(\beta_{j}^{t} + \nu e_{j})^2 + 4\alpha\zeta_{j}^{t}} \geq 2\alpha\epsilon,$$ and therefore the function $h_{2}$ becomes strictly decreasing. 
In the limit, we have $\lim_{\nu \to -\infty} h_{2}(\nu) = \infty$ and $\lim_{\nu \to \infty} h_{2}(\nu) = \epsilon\|\bm{e}\|_{1} - 1$.
Assuming $\epsilon\|\bm{e}\|_{1}>1$ (which ensures the feasibility of the constraints $\bm{x} \geq \epsilon$ and $\bm{e}^\top \bm{x} = 1$), we know that the function $h_{2}$ has exactly one root.

\paragraph{Negative bound}
Let's find $\nu_{\text{low}}$ such that $h_{2}(\nu) \leq 0$ for $\nu \geq \nu_{\text{low}}$. We start by bounding the term
\[
-\beta_{j}^{t}-\nu e_{j}+\sqrt{\left(\beta_{j}^{t}+\nu e_{j}\right)^{2}+4\alpha\zeta_{j}^{t}}\leq\delta_{j},
\]
where $\delta_{j} > \epsilon > 0$. We move $-\beta_{j}^{t} - \nu e_{j}$ to the left and square the inequality to remove the square root:
\[
\left(\beta_{j}^{t}+\nu e_{j}\right)^{2}+4\alpha\zeta_{j}^{t}\leq\left(\beta_{j}^{t}+\nu e_{j}+\epsilon_{j}\right)^{2}
\]
Eventually, we can extract a bound for $\nu$ to ensure that the inequality is satisfied for a chosen $\delta_{j}$:
\[
\nu\geq\frac{4\alpha\zeta_{j}^{t}-\epsilon_{j}^{2}}{2\delta_{j}e_{j}}-\frac{\beta_{j}^{t}}{e_{j}}.
\]
Let's set $\delta_{j} = \frac{2\alpha}{me_{j}}$, where $m$ is the number of elements in the sum, and take the maximum over $j$ to obtain the bound:
\[
\nu_{\text{low}}\geq\max_{j}\left(m\zeta_{j}^{t}-\frac{\alpha}{me_{j}^{2}}-\frac{\beta_{j}^{t}}{e_{j}}\right)
\]
We observe the validity of this bound provided that $\delta_{j} = \frac{2\alpha}{me_{j}} \geq \epsilon$ for all $j$. Then, it can be verified that
\begin{align*}
h_{2}\left(\nu_{\text{low}}\right) & =\sum_{j}e_{j}\max\left(\frac{-\beta_{j}^{t}-\nu_{\text{low}}e_{j}+\sqrt{\left(\beta_{j}^{t}+\nu_{\text{low}}e_{j}\right)^{2}+4\alpha\zeta_{j}^{t}}}{2\alpha},\epsilon\right)-1\\
 & =\sum_{j}e_{j}\frac{-\beta_{j}^{t}-\nu_{\text{low}}e_{j}+\sqrt{\left(\beta_{j}^{t}+\nu_{\text{low}}e_{j}\right)^{2}+4\alpha\zeta_{j}^{t}}}{2\alpha}-1\\
 & \leq\sum_{j}e_{j}\frac{\delta_{j}}{2\alpha}-1\\
 & =\sum_{j}^{m}e_{j}\frac{\frac{2\alpha}{me_{j}}}{2\alpha}-1=0,
\end{align*}
which proves that $\nu_{\text{low}}$ is a valid negative bound.

\paragraph{Positive bound}
Similarly, let's find $\nu_{\text{up}}$ such that $h_{2}(\nu) \geq 0$ for $\nu \leq \nu_{\text{up}}$. 
This time, we will bound $h_{2}$ from below and obtain
\begin{align*}
h_{2}\left(\nu\right) & =\sum_{j}e_{j}\max\left(\frac{-\beta_{j}^{t}-\nu e_{j}+\sqrt{\left(\beta_{j}^{t}+\nu e_{j}\right)^{2}+4\alpha\zeta_{j}^{t}}}{2\alpha},\epsilon\right)-1\\
 & \geq\sum_{j}e_{j}\frac{-\beta_{j}^{t}-\nu e_{j}+\sqrt{\left(\beta_{j}^{t}+\nu e_{j}\right)^{2}+4\alpha\zeta_{j}^{t}}}{2\alpha}-1\\
 & \geq\frac{1}{2\alpha}\sum_{j}e_{j}\left(-\beta_{j}^{t}-\nu e_{j}\right)-1\\
 & =-\frac{1}{2\alpha}\left(\sum_{j}e_{j}\beta_{j}^{t}+\nu\sum_{j}e_{j}^{2}\right)-1>0.
\end{align*}
We can, therefore, define
\[
\nu_{\text{up}}=-\frac{2\alpha+\sum_{j}e_{j}\beta_{j}^{t}}{\sum_{j}e_{j}^{2}}.
\]

\bibliographystyle{plain}
\bibliography{biblio}

\end{document}